%% file: main.tex
\documentclass[10pt]{article} % For LaTeX2e

\usepackage[preprint]{rlj} % Should be uncommented for preprint versions

\usepackage{amssymb}            % Defines common symbols like \mathbb R
\usepackage{mathtools}          % Extends amsmath, providing common math tools
\usepackage{mathrsfs}           % Enables \mathscr, which can work in cases that \mathcal does not
\usepackage{graphicx}           % For including images
\usepackage{subcaption}         % Allows for the use of subfigures and subcaptions
\usepackage[space]{grffile}     % For spaces in image names
\usepackage{url}                % For displaying URLs
\usepackage{lipsum}             % For placeholder text

\input{math_commands.tex}

\newcommand{\statespace}{\mathcal{S}}
\newcommand{\actionspace}{\mathcal{A}}
\newcommand{\observationspace}{\Omega}
\newcommand{\observationfunc}{\mathcal{O}}
\newcommand{\latentspace}{\mathcal{Z}}

\usepackage{amsthm}
\usepackage{thm-restate}

\newtheorem*{proposition*}{Proposition}

\usepackage[T1]{fontenc}
\usepackage{booktabs}       % professional-quality tables
\usepackage{tikz}
\usetikzlibrary{positioning}
\usetikzlibrary{shapes, shapes.geometric, shapes.symbols, shapes.arrows, shapes.multipart, shapes.callouts, shapes.misc}
\usetikzlibrary{decorations.pathreplacing}
\usepackage{wrapfig}
\usepackage{algorithm}
\usepackage{algorithmic}
\usepackage{multirow}
\usepackage{makecell}
\usepackage{subcaption}

\usepackage{hyperref}
\usepackage{url}

\title{Direct Advantage Estimation for Scalable and Sample-efficient Deep Reinforcement Learning}

\setrunningtitle{DAE for Scalable and Sample-efficient Deep RL}

\author{Hsiao-Ru Pan\textsuperscript{1}, Bernhard Sch\"olkopf\textsuperscript{1,2,3}}

\emails{\{hpan,bs\}@tuebingen.mpg.de}

\affiliations{
$^{1}$\textbf{Max Planck Institute for Intelligent Systems, T\"ubingen}\\
$^{2}$\textbf{ELLIS Institute T\"ubingen}\\
$^{3}$\textbf{ETH Z\"urich}\\
}

\contribution{
    We extend the theory of Direct Advantage Estimation to partially observable Markov decision processes, showing that it can be naturally applied to off-policy multi-step learning under partial observability.
    }
    {
    Direct Advantage Estimation (DAE)~\citep{pan2022direct, pan2024skill} was originally proposed for multi-step learning in fully observable Markov decision processes, which limits its applicability to simpler domains and excludes partially observable settings commonly encountered in practice.
    }

\contribution{
    We propose a discrete latent dynamics model that approximates transition probabilities in a compact latent space, significantly reducing the computational overhead of Off-policy DAE.
    }
    {
    Off-policy corrections in DAE require approximations of transition probabilities, which was achieved by learning generative models that predict observations.
    This approach becomes expensive in environments with high-dimensional observations and has therefore only been evaluated in simpler domains~\citep{pan2024skill}.
    }

\keywords{advantage estimation, sample efficient deep RL, POMDP, multi-step learning} % Your keywords

\summary{
Direct Advantage Estimation (DAE) has been shown to improve the sample efficiency of deep reinforcement learning algorithms.
However, its reliance on full environment observability limits its applicability in realistic settings, and its requirement to model transition probabilities incurs substantial computational overhead for high-dimensional observations.
In the present work, we address both limitations.
First, we extend the theoretical framework of DAE to partially observable domains with minimal modifications.
Second, we reduce its computational complexity by introducing discrete latent dynamics models that efficiently approximate transition probabilities.
We evaluate our approach on the Arcade Learning Environment and find that DAE scales effectively with function approximator capacity while retaining high sample efficiency.
}

\begin{document}

\makeCover  % Create the cover page
\maketitle  % Make the title section

\begin{abstract}
Direct Advantage Estimation (DAE) has been shown to improve the sample efficiency of deep reinforcement learning algorithms.
However, its reliance on full environment observability limits its applicability in realistic settings, and its requirement to model transition probabilities incurs substantial computational overhead for high-dimensional observations.
In the present work, we address both limitations.
First, we extend the theoretical framework of DAE to partially observable domains with minimal modifications.
Second, we reduce its computational complexity by introducing discrete latent dynamics models that efficiently approximate transition probabilities.
We evaluate our approach on the Arcade Learning Environment and find that DAE scales effectively with function approximator capacity while retaining high sample efficiency.
\end{abstract}

%%%%%%%%%%%%%%%%%%%%%%%%%%%%%%%%%%%%%%%%%%%%%%%%%%%%%%%%%%%%%%%%
%% Section: Submission of papers to RLJ/RLC
%%%%%%%%%%%%%%%%%%%%%%%%%%%%%%%%%%%%%%%%%%%%%%%%%%%%%%%%%%%%%%%%
\section{Introduction}
% Real-world decision-making problems often involve incomplete information, where observations received by the agents are not enough to fully determine the underlying state of the system.
% For example, a robot navigating a building may only have a local view of its surroundings.
% The Partially Observable Markov Decision Process (POMDP) framework~\citep{kaelbling1998planning} provides a generalization of the fully observable MDP framework~\citep{puterman2014markov} to tackle these problems.

While reinforcement learning (RL)~\citep{sutton2018reinforcement} has achieved unprecedented results in various domains~\citep{mnih2015human, berner2019dota, schrittwieser2020mastering, wurman2022outracing}, training such agents remains challenging and often requires millions or even billions of samples~\citep{henderson2018deep}.
%One major difficulty of training deep RL agents is to approximate the state(-action) value function ($Q^\pi(s,a)$ or $V^\pi(s)$) due to their non-stationary nature.\bernhard{Sentence is a bit unclear. Do you mean: Due to their non-stat nature, it is difficult to approximate ..., i.e., to train deep RL agents. }
A central component of deep RL is the approximation of state(-action) value functions, which are typically highly non-stationary and therefore difficult to learn.
\citet{pan2022direct} observed that the advantage function is comparatively more stable under policy variations and proposed Direct Advantage Estimation (DAE), which learns the advantage function directly rather than via value-function decomposition.
DAE demonstrated strong empirical performance, but is restricted to on-policy settings.
Subsequently, \citet{pan2024skill} extended DAE to off-policy settings, achieving improved sample efficiency.
However, the method suffers from significantly increased computational complexity due to the need to learn high dimensional generative models to approximate the transition probabilities.
In addition, the method only applies to fully observable MDPs~\citep{puterman2014markov}, which can be limiting in more realistic settings.

The present work addresses the two limitations of Off-policy DAE: (1) applicability in POMDPs~\citep{kaelbling1998planning}, and (2) high computational overhead.
More specifically, the contributions are:
\begin{itemize}
    %\item We show that the return decomposition proposed by \citet{pan2024skill} holds in POMDPs with minor modifications, which generalizes Off-policy DAE to partially observable domains. 
    \item We extend the theory of DAE to POMDPs, providing a generalized return decomposition.
    %with minor modifications.
    \item We reduce the computational cost of Off-policy DAE by modeling stochastic transitions in a low dimensional embedding space.
    \item We evaluate our approach using the Arcade Learning Environment~\citep{bellemare2013arcade}, and show that it (1) scales with the capacity of the function approximator, and (2) achieves performance comparable to Rainbow DQN~\citep{hessel2018rainbow} while only using 10\% of the data. Additionally, we perform extensive ablation studies to quantify the contribution of each component.
    
\end{itemize}
%%%%%%%%%%%%%%%%%%%%%%%%%%%%%%%%%%%%%%%%%%%%%%%%%%%%%%%%%%%%%%%%
%% Section: Background
%%%%%%%%%%%%%%%%%%%%%%%%%%%%%%%%%%%%%%%%%%%%%%%%%%%%%%%%%%%%%%%%
\section{Background}
\label{sec:background}
We consider a discounted POMDP defined by the tuple ($\statespace$, $\actionspace$, $T$, $\observationspace$, $\observationfunc$, $r$, $\gamma$)~\citep{kaelbling1998planning}, where $\statespace$ is the state space, $\actionspace$ is the action space, $T(s, a, s')$ denotes the transition probability from state $s$ into state $s'$ after taking action $a$, $\Omega$ is the observation space, $\observationfunc(s, o)$ denotes the probability of observing $o\in\Omega$ in state $s$, $r(s,a)$ denotes the reward received by the agent after taking action $a$ in state $s$, and $\gamma\in\left[0,1\right)$ denotes the discount factor.
For simplicity, we denote $T(s, a, s')$ by $p(s'|s,a)$, $\observationfunc(s, o)$ by $p(o|s)$, and the probability of observing a trajectory under $\pi$ by $p_\pi$.
We consider the case where $\statespace$, $\actionspace$, and $\observationspace$ are finite.
An agent in a POMDP cannot directly observe the states, but only the observations emitted from the state through $\observationfunc$.
We focus on the infinite-horizon discounted setting, where the goal of an agent is to find a policy $\pi$ that maximizes the expected return $J(\pi) = \E_\pi\left[\sum_{t=0}^\infty\gamma^t r(s_t, a_t)\right]$ (subscript indicates the actions follow $\pi$).

In MDPs, one can estimate the state(-action) value function $V^\pi(s)$ or $Q^\pi(s,a)$ as the states are observed directly.
In POMDPs, however, agents do not observe states directly, and have to estimate the values based on the observed history (information vector) $h_t=(o_0, a_0, r_0, o_1, ..., o_t)$~\citep{bertsekas2012dynamic}.
As such, their counterparts in POMDPs are defined by:
\begin{equation}
    \label{eqn:pomdp_vq}
    V^\pi(h_t) = \E_\pi\left[\left.\sum_{t'=0}^\infty \gamma^{t'}r_{t+t'}\right|h_t\right],\quad
    Q^\pi(h_t, a_t) = \E_\pi\left[\left.\sum_{t'=0}^\infty \gamma^{t'}r_{t+t'}\right|h_t, a_t\right].
\end{equation}

\subsection{Direct Advantage Estimation}
Aside from $Q$ and $V$, another function of interest is the advantage function defined by $A^\pi(s,a) = Q^\pi(s,a) - V^\pi(s)$~\citep{baird1995residual}.
\citet{pan2022direct} proposed Direct Advantage Estimation (DAE) to estimate the advantage function by minimizing the constrained objective:
\begin{equation}
    \label{eqn:dae}
    \gL(\hat{A}, \hat{V})=\E_\pi
    \left[
        \left(
            \sum_{t=0}^{n-1}\gamma^t (r_t - \hat{A}_t) + \gamma^n\hat{V}_\text{target}(s_n) - \hat{V}(s_0)
        \right)^2
    \right]\;\;
    \text{s.t.} \; \E_\pi[\hat{A}(s,a)|s]=0,
\end{equation}
where $\hat{V}_\text{target}$ is a given bootstrapping target, $r_t=r(s_t, a_t)$, and $\hat{A}_t=\hat{A}(s_t, a_t)$.
The constraint enforces the centering property of the advantage function (i.e., $\E_\pi[A^\pi(s,a)|s]=0$).
The minimizer of $\gL(\hat{A},\hat{V})$ can be viewed as a multi-step estimate of $(A^\pi, V^\pi)$, as the objective includes multiple steps of unbiased rewards.
One limitation of DAE is that it is on-policy, that is, the behavior policy ($\E_\pi$ in the objective) has to be the same as the target policy ($\E_\pi$ in the constraint).
\citet{pan2024skill} extended DAE to off-policy settings, by showing that if we view stochastic transitions as actions from an imaginary agent (nature), then the return of a trajectory can be decomposed into:
\begin{equation}
    \label{eqn:decompose}
    \sum_{t=0}^\infty \gamma^t r(s_t,a_t) = \sum_{t=0}^\infty \gamma^t \left(A^\pi(s_t,a_t) +  B^\pi(s_t, a_t, s_{t+1})\right) + V^\pi(s_0),
\end{equation}
where $B^\pi(s_t, a_t, s_{t+1}) = \gamma V^\pi(s_{t+1}) - \gamma \E[V^\pi(s')|s_t, a_t]$ is the advantage function of nature, which quantifies how much of the return is caused by the randomness of the environment.
This decomposition generalizes DAE into off-policy settings by incorporating $\hat{B}$ into the objective (Equation~\ref{eqn:dae}):
\begin{equation}
\begin{gathered}
    \label{eqn:offdae}
    \gL(\hat{A}, \hat{B}, \hat{V})=\E_\mu
    \left[
        \left(
            \sum_{t=0}^{n-1}\gamma^t (r_t - \hat{A}_t - \hat{B}_t) + \gamma^n\hat{V}(s_n) - \hat{V}(s_0)
        \right)^2
    \right]\\
    \text{subject to }\;
    \begin{cases}
        \E_{\pi}[\hat{A}(s,a)|s]=0\\
        \E_{s'\sim p(\cdot|s,a)}[\hat{B}(s,a,s')]=0
    \end{cases}.
\end{gathered}
\end{equation}
Contrary to Equation~\ref{eqn:dae}, the behavior policy ($\E_\mu$) and the target policy ($\pi$ in the constraint) need not be equal.
% Improved sample efficiency was also reported when compared to the original DAE.
% Interestingly, this estimator also provides us a way to probe into whether a given environment is stochastic by comparing the performance between DAE (without $\hat{B}$) and Off-policy DAE (with $\hat{B}$).
% This is because $B^\pi\equiv 0$ holds for any $\pi$ in deterministic environments, and ignoring $\hat{B}$ can lead to suboptimal performance.
Intuitively, $\hat{A}$ and $\hat{B}$ can be viewed as corrections for stochasticity originating from the policy and the transitions, respectively.
Under mild assumptions on the coverage of $\mu$, one can show that $(A^\pi, B^\pi, V^\pi)$ is the unique minimizer of this objective, suggesting that we can perform off-policy policy evaluation by minimizing this objective.
One benefit of this approach is that it does not require importance sampling to correct for off-policy data, which can lead to unbounded variance.
However, this approach has some limitations: (1) it only applies to MDPs, and (2) enforcing the $\hat{B}$ constraint requires estimating $p(s'|s,a)$, which can be expensive for large state spaces.

%%%%%%%%%%%%%%%%%%%%%%%%%%%%%%%%%%%%%%%%%%%%%%%%%%%%%%%%%%%%%%%%
%% Section: Return Decomposition
%%%%%%%%%%%%%%%%%%%%%%%%%%%%%%%%%%%%%%%%%%%%%%%%%%%%%%%%%%%%%%%%

\section{Return Decomposition in POMDPs}
\label{sec:return_decomp}
The key observation of~\citet{pan2024skill} is that the return can be decomposed using advantage functions (Equation~\ref{eqn:decompose}). 
Here, we show that such a decomposition also exists in POMDPs.

Firstly, define the advantage function in POMDPs by $A^\pi(h_t, a_t) = Q^\pi(h_t, a_t) - V^\pi(h_t)$.
Similar to its counterpart in MDPs, this function also satisfies the centering property, namely
$\sum_{a\in\actionspace} \pi(a| h_t) A^\pi(h_t, a) = 0$.
The next question is how we can similarly define $B^\pi$ such that the return can be decomposed, and whether this function also satisfies the centering condition.
We proceed by examining the difference between the return and the sum of the advantages along a trajectory, namely:
\begin{align}
    \sum_{t=0}^\infty\gamma^tr_t - \left(\sum_{t=0}^\infty\gamma^tA^\pi(h_t, a_t) + V^\pi(h_0)\right) = \sum_{t=0}^\infty\gamma^t\left(r_t + \gamma V^\pi(h_{t+1}) - Q^\pi(h_t, a_t)\right).
\end{align}
This equation suggests the definition $B^\pi(h_t, a_t, h_{t+1}) \coloneqq r_t + \gamma V^\pi(h_{t+1}) - Q^\pi(h_t, a_t)$.
Recall that $h_{t+1}$ is the concatenation of $h_t$ and $(a_t, r_t, o_{t+1})$, meaning that we can rewrite $B^\pi(h_t, a_t, h_{t+1})$ as $B^\pi(h_t, a_t, r_t, o_{t+1})$.
This recovers the decomposition (Equation~\ref{eqn:decompose}); furthermore, this $B^\pi$ also satisfies a slightly different centering property, namely, $\E_{(r_t, o_{t+1})\sim p(\cdot|h_t, a_t)}\left[B^\pi(h_t, a_t, r_t, o_{t+1})\right] =0$.
This equation differs from its MDP counterpart by the variables that are being marginalized.
In POMDPs, since the agent cannot observe the underlying states, we have to marginalize over the observed variables after taking an action (i.e., the immediate reward and the next observation).
This brings us to the following generalization:
\begin{restatable}{proposition}{propmain}
\label{prop:main}
Given behavior policy $\mu$, target policy $\pi$, and backup length $n > 0$.
$(A^\pi, B^\pi, V^\pi)$ is a minimizer of% the following constrained objective
%BS: before, you did not say you are minimizing it
\begin{align}\label{eqn:objective}
\begin{aligned}
    \gL(\hat{A}, \hat{B}, \hat{V}) &= \E_\mu \left[
        \left(
            \sum_{t'= 0}^{n-1}\gamma^{t'}
            \left(
                r_{t+t'}-\hat{A}_{t+t'}-\hat{B}_{t+t'}
            \right)
            +\gamma^n\hat{V}(h_{n+t})-\hat{V}(h_t)
        \right)^2
    \right]\\
    &\mathrm{subject\;to }
    \begin{cases}
        \E_{a\sim\pi(\cdot|h)}[\hat{A}(h,a)]=0\quad\forall h\in\gH \\
        \E_{(r, o')\sim p(\cdot|h, a)}[\hat{B}(h,a, r, o')]=0\quad \forall (h,a)\in \gH\times\actionspace
    \end{cases},
\end{aligned}
\end{align}
where $\gH$ is the set of all trajectories of the form $(o_0, a_0, r_0, ... o_t)$, $\hat{A}_{t}=\hat{A}(h_{t}, a_{t})$, and $\hat{B}_{t}=\hat{B}(h_t, a_t, r_t, o_{t+1})$.
Furthermore, if $p_\pi(h)>0\implies p_\mu(h) >0$ holds for all $h$ (i.e., the behavior policy has a larger coverage), then
the minimizer is unique at trajectories covered by $\pi$.
%then . \bernhard{maybe re-order this. I.e., say "$(A^\pi, B^\pi, V^\pi)$ is a minimizer of ..." BEFORE the mathematical statement of the optimization problem.}
%Furthermore, the minimizer is unique if, for any trajectory $h\in\gH$, $p_\mu(h)>0$.%\bernhard{this is a little ambiguous. More precise would be: is unique if for any trajectory, $\mu$ has non-zero probability of...}
\end{restatable}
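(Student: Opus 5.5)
The plan is to show that the objective attains the value zero at $(A^\pi,B^\pi,V^\pi)$. Then, for uniqueness, any feasible minimizer must also give zero residual almost surely under $\mu$, and the constraints then force it to equal $(A^\pi,B^\pi,V^\pi)$ on $\pi$-covered histories.

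\textbf{Step 1 (feasibility and zero loss).} The constraints hold for $A^\pi$ and $B^\pi$ by the centering properties established above. For $A^\pi$ this is $\sum_a \pi(a|h)A^\pi(h,a)=0$. For $B^\pi$ it is $\E_{(r,o')\sim p(\cdot|h,a)}[r+\gamma V^\pi(h')]=Q^\pi(h,a)$, which is the Bellman identity in history space. That identity uses the fact that, conditioned on $(h,a)$, the law of $(r,o')$ is obtained from the belief $p(s|h)$, which does not depend on the behavior policy. Next I substitute the definitions $A^\pi=Q^\pi-V^\pi$ and $B^\pi=r+\gamma V^\pi(h_{t+1})-Q^\pi$. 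Each summand then equals $r_{t+t'}-A^\pi_{t+t'}-B^\pi_{t+t'}=V^\pi(h_{t+t'})-\gamma V^\pi(h_{t+t'+1})$. The discounted sum telescopes to $V^\pi(h_t)-\gamma^nV^\pi(h_{t+n})$, so the residual is identically zero on every trajectory. Since the loss is nonnegative, $(A^\pi,B^\pi,V^\pi)$ is a minimizer.

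\textbf{Step 2 (uniqueness).} Let $(\hat A,\hat B,\hat V)$ be any feasible minimizer. Its loss is $0$, so the $n$-step residual vanishes on every $\mu$-positive trajectory segment. I would first reduce this to a one-step statement by defining $\hat Q(h,a):=\hat A(h,a)+\hat V(h)$ and $\delta_t := r_t-\hat A_t-\hat B_t+\gamma\hat V(h_{t+1})-\hat V(h_t)$. The $n$-step residual is then $\sum_{t'}\gamma^{t'}\delta_{t+t'}$.

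I plan to argue by induction using the constraints. Take the conditional expectation of the residual given $(h_t,a_t)$ under the environment dynamics; this is legitimate on $\mu$-positive histories. The $\hat B$ terms then vanish. The remaining part is a one-step relation saying that the $n$-step Bellman-type operator under $\mu$ maps $\hat V$ appropriately. The difficulty is that the future actions inside the residual are drawn from $\mu$, not $\pi$. I expect this to be handled by \emph{not} taking expectations over future actions. Instead, I would use pointwise vanishing of the residual: it holds for every $\mu$-positive continuation, including every $(a,r,o')$ branch. Comparing two segments that share a prefix and differ in their last step isolates single-step quantities.

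Concretely, I would aim to show that $\hat B(h,a,r,o') - \gamma\hat V(h') + \hat Q(h,a) - r$ is a function $c$ of the segment start only. Taking the $p(\cdot|h,a)$ expectation and applying the $\hat B$ constraint then gives $\hat Q(h,a)=\E[r+\gamma \hat V(h')|h,a]$. Applying the $\hat A$ constraint gives $\hat V(h)=\sum_a\pi(a|h)\hat Q(h,a)$. So $\hat V$ satisfies the $\pi$-Bellman equation on covered histories. Because this is a $\gamma$-contraction and the coverage assumption ensures that every history reachable under $\pi$ is $\mu$-positive, the fixed point is unique and $\hat V=V^\pi$ there. It then follows that $\hat Q=Q^\pi$, $\hat A=A^\pi$, and $\hat B=B^\pi$.

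\textbf{Main obstacle.} The hard part is the reduction from vanishing $n$-step residuals with behavior actions to one-step relations. This requires showing that the pointwise identities over overlapping windows pin down $\delta_t\equiv0$. I would try a shift argument first: write the windows starting at $t$ and at $t+1$ and subtract them. This gives $\delta_t=\gamma^n\delta_{t+n}$ along $\mu$-trajectories. Iterating yields $|\delta_t|\le\gamma^{kn}\sup|\delta|$, which tends to $0$ for bounded estimators, so $\delta\equiv0$ on covered trajectories. Here boundedness follows from the finite spaces and bounded rewards. With $\delta\equiv0$, the constraint argument above applies.
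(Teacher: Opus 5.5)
Your route differs from the paper's. The paper gives no direct argument. It passes to the MDP with state space $\gH$ and notes that history coverage forces action coverage: $p_\pi(h_{t+1})=p_\pi(h_t)\pi(a_t|h_t)p(r_t,o_{t+1}|h_t,a_t)>0$ implies $p_\mu(h_{t+1})>0$, hence $\mu(a_t|h_t)>0$. It then invokes the Off-policy DAE theorem of \citet{pan2024skill}, with $B^\pi$ redefined to contain $r$ because rewards in the history MDP are random. Your proof is self-contained: telescoping gives zero residual, $R_t-\gamma R_{t+1}$ gives the shift identity $\delta_t=\gamma^n\delta_{t+n}$, and the two constraints then give the one-step $\pi$-Bellman equation. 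This avoids checking the cited theorem's hypotheses in an MDP with infinitely many states and stochastic rewards, and it shows exactly where coverage enters. You should still spell out the action-coverage step as the paper does. You derived $\hat Q(h,a)=\E[r+\gamma\hat V(h')|h,a]$ only for $\mu$-positive pairs $(h,a)$, but you need it for every $a$ with $\pi(a|h)>0$.

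The genuine problem is your claim that boundedness ``follows from the finite spaces and bounded rewards.'' Finiteness of $\statespace$, $\actionspace$, $\Omega$ does not make $\gH$ finite. The candidates $\hat A,\hat B,\hat V$ are arbitrary functions on $\gH$, so nothing controls $\sup|\delta|$. Your contraction step has the same issue, since the $\pi$-Bellman operator on histories is a contraction only on bounded functions.

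A better argument cannot fix this, because without boundedness the uniqueness claim is false. Take $\hat A=A^\pi$, $\hat B=B^\pi$, and $\hat V(h)=V^\pi(h)+c\gamma^{-|h|}$ with $c\neq 0$, where $|h|=t$ for $h=h_t$. Both constraints hold. By your Step 1 computation the residual reduces to $\gamma^n c\gamma^{-(t+n)}-c\gamma^{-t}=0$ on every trajectory. So this is a second zero-loss minimizer that differs from $V^\pi$ at every history.

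You therefore have to state boundedness of the candidates as an explicit hypothesis. The paper's appeal to the finite-MDP result tacitly needs the same thing, since the induced MDP on $\gH$ has infinitely many time-indexed states. With that hypothesis, your shift argument and the contraction step go through. The shift argument also uses that windows starting at every $t$ carry positive weight in $\E_\mu$; this is the intended reading, but you should say so.
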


See Appendix~\ref{app:proof} for a proof.
The assumption that the behavior policy has a larger coverage is common in off-policy settings~\citep{Precup2000EligibilityTF, Thomas2016DataEfficientOP}, and is required in our case to guarantee the solution is unique.
Note that, slightly different from the MDP version, we do not require additional coverage regarding actions because $h$ already encodes past actions.
At its core, Proposition~\ref{prop:main} differs from its MDP counterpart (Equation~\ref{eqn:offdae}) by simply replacing states with histories, and transition probabilities with conditional densities of the observed variables (in the $\hat{B}$ constraint).
This is a consequence of the fact that POMDPs can be reformulated as MDPs using information vectors~\citep{bertsekas2012dynamic}.
Like DAE, this can be seen as an off-policy multi-step method for value approximation, as the objective function includes $n$ steps of rewards.
Deploying this method in practice, however, is non-trivial due to the constraints.
The $\hat{A}$ constraint can be enforced upon a given approximator $f(h, a)$ for a given policy $\pi$ by constructing $\hat{A}(h, a) = f(h, a) - \sum_{a\in\actionspace} f(h, a)\pi(a|h)$ \citep{wang2016dueling}.
Enforcing the $\hat{B}$ constraint is more challenging due to its dependency on $p(r, o'|h, a)$, which is typically unknown to the agent.
We discuss how to efficiently approximate this constraint using latent dynamics models below.

\subsection{Discrete Latent Dynamics Model for Constraint Approximation}\label{sec:constraints}
%We now demonstrate how to approximate the $\hat{B}$ constraint in the objective function (Equation~\ref{eqn:objective}) through reparameterization.

%The $\hat{A}$ constraint can be easily enforced upon a given function approximator $\hat{f}(h, a)$ for a given policy $\pi$ by constructing $\hat{A}(h, a) = \hat{f}(h, a) - \sum_{a\in\actionspace} \hat{f}(h, a)\pi(a|h)$ \citep{wang2016dueling}.
%The $\hat{B}$ constraint, on the other hand, is much more challenging, as it requires knowledge of $p(r, o'|h, a)$.
In the original Off-policy DAE implementation~\citep{pan2024skill}, the $\hat{B}$ constraint was approximated by first encoding transitions $(h,a,r,o')$\footnote{We adopt the POMDP setting here for consistency, but note that this was originally developed for MDPs.} into a small discrete latent space $z\in\latentspace$ using a conditional variational autoencoder (CVAE) \citep{kingma2013auto, sohn2015learning}, and constructing $\hat{B}(h,a,r,o')$ from a given function approximator $g(h,a,z)$ by:
\begin{equation}\label{eqn:b_constraint}
    \hat{B}(h,a,r,o') = \E_{z\sim q_\phi(\cdot|h,a,r,o')}[g(h,a,z)] - \E_{z\sim p_\phi(\cdot|h,a)}[g(h,a,z)],
\end{equation}
where $q_\phi(\cdot|h,a,r,o')$ is the approximated posterior (encoder), $p_\phi(\cdot|h,a)$ is the prior, and $\phi$ is the parameters of the CVAE.
By using discrete latent variables, the expectations with respect to $z$ can be computed efficiently.
It then follows that $\E_{(r, o')\sim p(\cdot|h,a)}[\hat{B}(h,a,r,o')]\approx 0$.
Learning the CVAE, however, can be computationally expensive if observations are high dimensional due to the need to reconstruct observations.

\begin{figure}
    \centering
    \includegraphics[width=.98\linewidth]{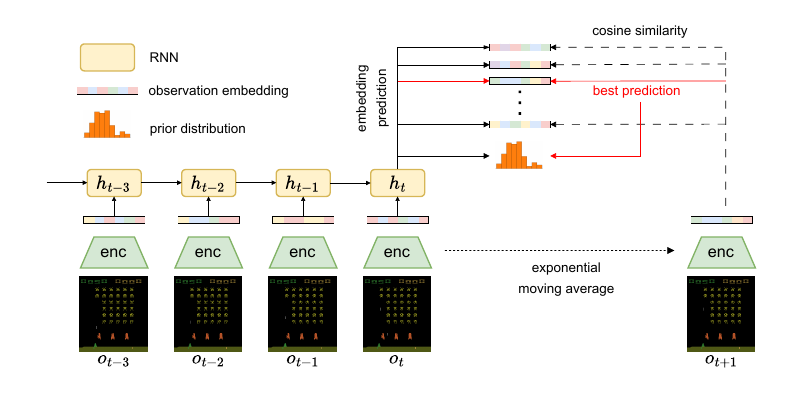}

    \caption{
    The latent dynamics model first embeds observations ($o_t$) into low dimensional embeddings ($x_t$), which are then processed by an RNN to capture the information vectors ($h_t$) (we omit conditioning of actions and rewards for illustrative purpose).
    At each time-step, the model makes $|\latentspace|$ predictions ($\hat{x}$) of the next embedding along with the prior distribution $p_\phi(\cdot|h,a)$ to capture the stochasticity.
    During training, the embedding predictions are compared to the embedding of the next embedding, and gradients only propagate through the best prediction with the corresponding index used to train the categorical prior distribution.
    }
    \label{fig:stochastic_model}
\end{figure}

To reduce computational overhead, we propose to learn a discrete dynamics model purely in the embedding space\footnote{We will refer to the space of encoded observations as the embedding space, and $\latentspace$ as the latent space of the CVAE to avoid confusion.} (see Figure~\ref{fig:stochastic_model}).
This is achieved by first embedding observations into a low dimensional vector $x=\mathtt{enc}(o)\in\R^d$ (with $d\ll\text{dim}(\observationspace$)), where $\mathtt{enc}$ denotes the encoder (e.g., a convolutional network), and learning to predict $x_{t+1}=\mathtt{enc}(o_{t+1})$ from the observed history $(h_t, a_t)$.
This approach is similar to the self-predictive representation (SPR)~\citep{schwarzer2020data}; however, SPR only produces a single prediction, which cannot capture stochastic transitions.
We address this by combining SPR with the Winner-Takes-All (WTA) loss~\citep{lee2015m, guzman2012multiple}, which was shown to be useful for modeling stochastic predictions.
More specifically, we combine them by: (1) making $|\latentspace|$ predictions of the next embedding (note that $|\latentspace|$ is finite), and (2) minimizing only the best prediction. 
This results in the following objective:
\begin{gather}
    \label{eqn:recon_obj}
    \gL_\text{rec}=\sum_{z\in\latentspace} \sI\big[z = \argmin_i||\hat{x}_{t+1, i}(h_t, a_t) - x_{t+1}||\big] ||\hat{x}_{{t+1},z}(h_t,a_t) - \mathtt{sg}(x_{t+1})||^2,
\end{gather}
where $\sI$ is the indicator function, and $\mathtt{sg}$ denotes stop-gradient.
Intuitively, this can be seen as performing $k$-means clustering (with $k=|\latentspace|$) in the embedding space with centroids $\hat{x}_{\cdot,z}$~\citep{rupprecht2017learning}.
The WTA loss is known to be difficult to train as the gradient only propagates through the best prediction, which can sometimes lead to collapse of predictions.
As such, in practice, we use an annealing procedure similar to the evolving WTA~\citep{makansi2019overcoming}, where the indicator function is replaced by a soft weighting (see Appendix~\ref{app:exp} for details).
Next, note that the objective is equivalent to a conditional vector-quantized VAE (VQ-VAE)~\citep{van2017neural}, with posterior $q_\phi(z|h_t, a_t, x_{t+1}) = \sI[z = \argmin_i||\hat{x}_{t+1, i}(h_t, a_t) - x_{t+1}||]$,
and codebook $\hat{x}_{t+1, z}(h_t, a_t)$ that are dependent on the information vector $h_t$.
Consequently, we can learn the prior by minimizing the KL-divergence between the prior $p_\phi(z|h_t, a_t)$ and the posterior $q_\phi(z|h_t, a_t, x_{t+1})$.
With this CVAE, we can then approximate the $\hat{B}$ constraint using Equation~\ref{eqn:b_constraint}.\footnote{
Note that the $\hat{B}$ constraint indicates that we should also consider stochasticity from the rewards.
This can be achieved by adding another reward reconstruction term into Equation~\ref{eqn:recon_obj}.}

In practice, we find that using shallow multilayer perceptrons (MLPs) to model the dynamics already achieves strong empirical performance with negligible computational overhead compared to other parts of the system.
In addition, we find it possible to learn the RL objective (Equation~\ref{eqn:objective}) and the dynamics model jointly end-to-end to further reduce computational overhead compared to learning them separately as done by~\citet{pan2024skill}.

It should be noted that the proposed discrete latent dynamics modeling approach is not specific to the DAE objective and is amenable to other model-based planning methods (e.g., tree search~\citep{antonoglou2021planning}).
However, in the present work, we use it solely for off-policy corrections, leaving the exploration of other potential applications to future work.

%%%%%%%%%%%%%%%%%%%%%%%%%%%%%%%%%%%%%%%%%%%%%%%%%%%%%%%%%%%%%%%%
%% Section: Experiments
%%%%%%%%%%%%%%%%%%%%%%%%%%%%%%%%%%%%%%%%%%%%%%%%%%%%%%%%%%%%%%%%
\section{Experiments}
\label{sec:exp}
We examine the performance of the POMDP version of DAE using 47 environments\footnote{We exclude hard exploration games such as Montezuma's Revenge, as they typically require specialized exploration strategies, and often have low predictive power on the overall performance~\citep{aitchison2023atari}.} from the Arcade Learning Environment (ALE)~\citep{bellemare2013arcade}, which includes environments with diverse dynamics and various degrees of partial observability.

We use the same environment setting as the Dopamine baselines~\citep{castro18dopamine}, which largely follows the modern evaluation protocols proposed by~\citet{machado2018revisiting}, including the use of sticky actions (repeat previous action with a certain probability) and discarding end-of-life signals.
Note that while sticky actions were originally proposed to inject stochasticity into the environments, they also introduce additional partial observability due to their dependencies on previous actions.

We evaluate our method using a DQN-like~\citep{mnih2015human} agent with some modifications, which we briefly summarize:
(1) \textbf{Recurrent Architecture}: We use an LSTM~\citep{hochreiter1997long} after the convolutional encoder to process sequences of observations.
Aside from observations, we also feed previous actions and rewards into the LSTM to model the full history.
Similar to R2D2~\citep{kapturowski2018recurrent}, we also store the recurrent states in the replay buffer and include a short burn-in sequence to initialize the LSTM states.
(2) \textbf{DAE objective}: We replace the 1-step Q-learning objective with the multi-step DAE objective (Equation~\ref{eqn:objective}, we set $n{=}16$ by default), and use three separate MLPs on top of the LSTM to model $\hat{A}$, $\hat{B}$, and $\hat{V}$.
(3) \textbf{Discrete Latent Dynamics Model}: We use three additional MLPs on top of the LSTM to estimate the next observation embedding $\hat{x}_{t+1, z}(h_t, a_t)$, the immediate reward $p(r_t|h_t, a_t)$, and the prior distribution $p(z|h_t, a_t)$ for approximating the $\hat{B}$ constraint.
Similar to SPR, we use an exponential moving average of the online network as the target network to generate the next observation embeddings for the dynamics model training.
This target network is also used to construct smoothly changing target policies and value bootstrapping targets for the DAE objective, which were found to be important for DAE~\citep{pan2024skill}.
(4) \textbf{Deeper Network}: We replace the shallow three-layer convolutional network used in the original DQN by the 15-layer deep residual network proposed by~\citet{espeholt2018impala} (denoted IMPALACNN below), which was found to enjoy better scalability and improved sample efficiency~\citep{schwarzer2023bigger}.
The CNN-LSTM backbone is shared for both the value heads and the dynamics heads to reduce computational overhead.
For more details, we refer the reader to Appendix~\ref{app:exp}.
In terms of RL, our agent can be viewed as a DQN variant with (a) \textbf{POMDP correction} and (b) \textbf{multi-step off-policy learning} (enabled by DAE).
Below, we show how these changes affect the performance of the agent.

In the following experiments, we train our method for 20 million frames (5 million environment steps due to frame-skipping), and evaluate the agent every 1 million frames by averaging the cumulative scores of 50 episodes.
We follow the protocol of~\citet{agarwal2021deep} and report the interquartile means (IQM) and performance profiles, along with 95\% bootstrap confidence intervals aggregated over 5 random seeds and 47 environments.

\begin{figure}[t]
    \centering
    \includegraphics[width=.495\linewidth]{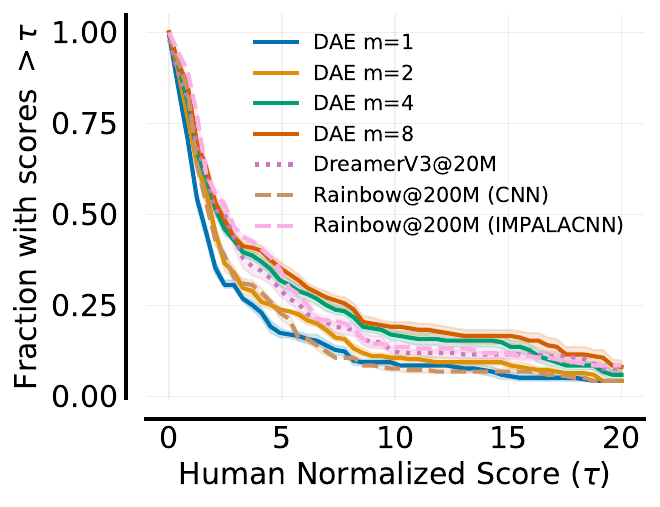}
    \includegraphics[width=.495\linewidth]{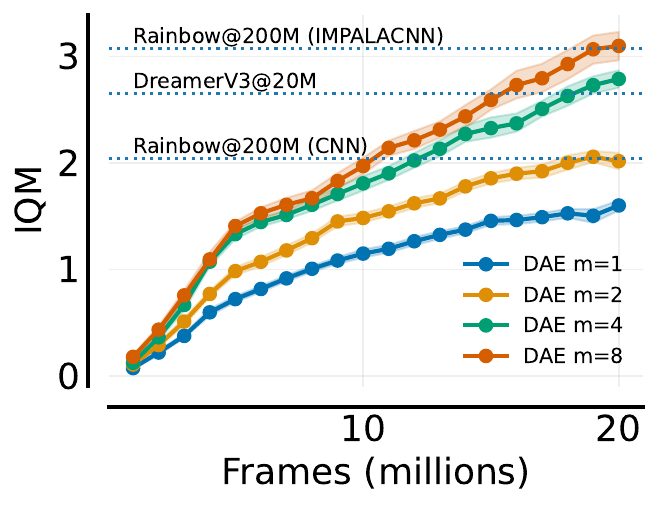}
    \caption{Performance profile (left) and sample efficiency (right) of DAE.}
    \label{fig:scaling_efficiency}
\end{figure}

\textbf{Scalability and Sample Efficiency}
\citet{obando2024deep} showed that naively scaling up the capacity of the function approximator does not always translate to an increase in performance.
On the other hand, DAE was shown to be easily scalable in the on-policy setting~\citep{pan2022direct}.
Here, we examine the scalability of DAE in the off-policy POMDP setting by increasing the width (multiplied by $m$) of the IMPALACNN backbone.
We compare DAE to three baselines: (1) Dopamine Rainbow DQN~\citep{castro18dopamine, hessel2018rainbow}\footnote{Dopamine Rainbow DQN is a modern reimplementation of the Rainbow DQN, which includes 3 core improvements ($n$-step backup, prioritized replay, and distributional RL) from the original implementation.}
(2) A scaled-up version of Rainbow using IMPALACNN, and (3) DreamerV3~\citep{hafner2025mastering}.
Both (1) and (2) represent classical frame-stacking model-free baselines, whereas (3) is a more recent recurrent model-based approach closer to our agent.
For baselines (1) and (2), we use the scores reported by~\citet{castro18dopamine}, which were trained for 200 million frames.
For (3), we train DreamerV3 using the preset 50M parameter network, which has a similar number of parameters to our $m{=}8$ variant, to establish a closer comparison (see Appendix~\ref{app:dreamerv3} for more details).
Figure~\ref{fig:scaling_efficiency} shows that the $m{=}2$ variant already performs similarly to Rainbow while using only 10\% of the training frames, and by scaling up the network to $m{=}8$, we achieve performance comparable to Rainbow with IMPALACNN.
Similarly, we find that DAE is competitive with DreamerV3 at $m{=}4$, and is even more sample efficient at $m{=}8$.%\bernhard{is "efficient" the right word? Can we simply say: Similarly, DAE is competitive with DreamerV3 at $m{=}4$, and stronger for $m{=}8$? Or do you want to say "sample efficient"}

\begin{wraptable}{r}{.45\linewidth}
    \centering
    \caption{Effect of each component.}
    \begin{tabular}{cc}
    \toprule
        Ablation & IQM \\ \midrule
         DAE & 2.79 \\
         Frame-Stacking & 2.33 ({\color{red}{-0.46}})\\
         No off-policy correction & 1.75 ({\color{red}{-1.04}})\\
         No multi-step & 1.05 ({\color{red}{-1.74}})\\ \bottomrule
    \end{tabular}    
    \label{tab:ablation_summary}
\end{wraptable}
Next, we perform ablation studies to better understand the contribution of the modifications.
To limit computational cost, we use only the $m=4$ model.
Figure~\ref{fig:ablation_summary} and Table~\ref{tab:ablation_summary} summarize the results.

\begin{figure}[t]
    \centering
    \includegraphics[width=.495\linewidth]{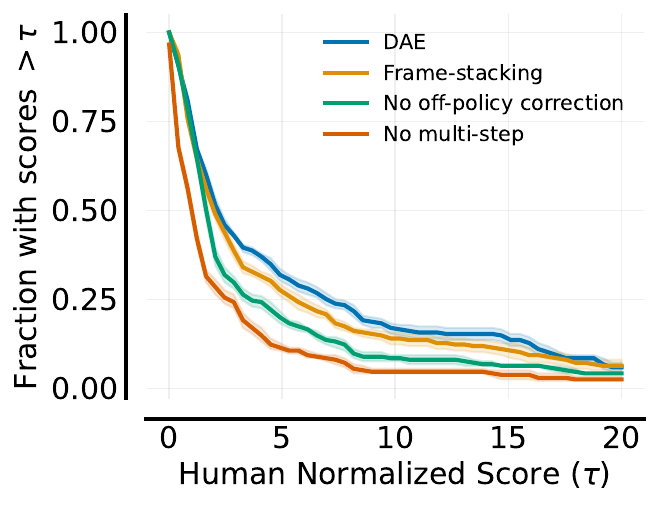}
    \includegraphics[width=.495\linewidth]{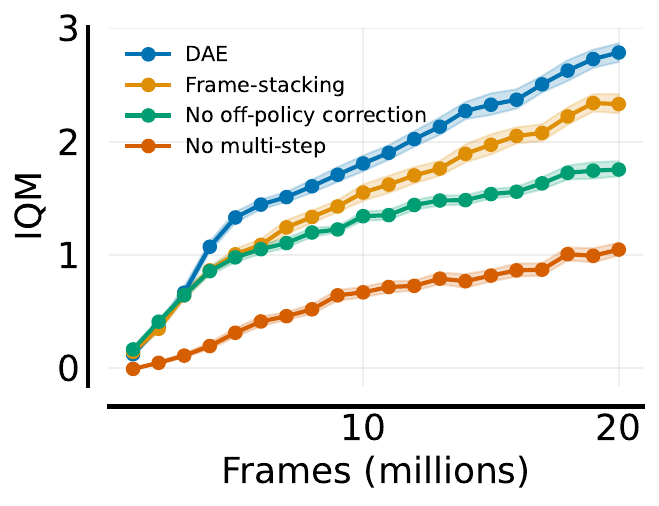}
    \caption{Performance profile (left) and sample efficiency (right) of the ablation study. For each ablation, we remove the corresponding component from the base DAE agent.}
    \label{fig:ablation_summary}
\end{figure}

\textbf{Multi-step Learning}
Multi-step learning speeds up learning, reduces bias from bootstrapping, and was found to stabilize training~\citep{hernandez2019understanding, van2018deep}.
However, it also increases the variance of value updates, and choosing the backup length $n$ can be seen as a bias-variance tradeoff~\citep{kearns2000bias}.
Here, we compare multi-step learning ($n=16$) to single-step learning ($n=1$).
From the learning efficiency curve (Figure~\ref{fig:ablation_summary}, right), we see that multi-step learning significantly improves sample efficiency, and is, in fact, the most important component in this ablation study, accounting for a decrease of 1.74 in the IQM score.
This suggests that the bias from bootstrapping significantly exceeds the variance from the multi-step learning in this case.

\textbf{Off-policy Correction}
In previous studies, multi-step learning was often used without proper off-policy corrections~\citep{van2018deep, hessel2018rainbow, kapturowski2018recurrent, hernandez2019understanding, schwarzer2023bigger, dsample}.
Here, we demonstrate the importance of off-policy correction.
Similar to~\citet{pan2024skill}, we partially disable off-policy corrections by setting $\hat{B}\equiv 0$ during training, which can be seen as ignoring stochasticity from the environment (note that $B^\pi\equiv 0$ for deterministic environments).\footnote{The typical multi-step method is more aggressive and equivalent to enforcing both $\hat{A}\equiv 0$ and $\hat{B} \equiv 0$.}
Consistent with prior work, we find that, even without off-policy corrections, multi-step learning still significantly outperforms single-step learning.
However, off-policy correction further improves the performance of our agent.
This also indicates that the learned latent dynamics model can well approximate the dynamics, since the $\hat{B}$ constraint hinges on this approximation.

% On the other hand, we see from the sample efficiency curves (Figure~\ref{fig:ablation_summary}, right), initially, the curve without off-policy correction largely overlaps with the base DAE agent, and only begins to deviate after $\sim$4 million frames.
% This is likely due to the data being more on-policy at the beginning, and the off-policy effect only comes into effect later at training.\footnote{For on-policy data, $\hat{B}$ is not required~\citep{pan2022direct}.}n/  

\textbf{Frame-Stacking}
Frame-stacking has been the standard approach to approximate the ALE environments as MDPs since its introduction by~\citet{mnih2015human}.
However, previous works have demonstrated that frame-stacking is not enough to fully capture the partial observability of the ALE~\citep{kapturowski2018recurrent}.
Here, we demonstrate the effectiveness of the POMDP correction by replacing the recurrent layer by a single-layer MLP with a similar number of parameters.
Consistent with prior work~\citep{kapturowski2018recurrent, hausknecht2015deep}, we find that frame-stacking is suboptimal and accounts for ${\sim}16\%$ of the performance degradation (-0.46 in IQM score), indicating the importance of the POMDP correction.
Aside from performance, we also see qualitative differences in the entropy of the prior distribution of the dynamics model.
Since the ALE is deterministic by design, uncertainties of the next observation prediction comes primarily from partial observability of the environment (another source is sticky-actions, which also introduces stochasticity).
Figure~\ref{fig:entropy_score} compares the entropy of the prior distributions of the dynamics models and the learning curves between BattleZone and Pong.
In BattleZone, an agent has to control a tank in a 3D environment with limited views of its surroundings in third-person.
In contrast, Pong is almost fully observable except for the velocities of the objects (ball and paddles), which can be inferred from the past few frames.
Here, we see that the LSTM agent converges to a much lower entropy compared to the frame-stacking agent in BattleZone, suggesting that the next observations have dependencies beyond the most recent 4 frames that cannot be utilized by the frame-stacking agent.
On the other hand, both the LSTM and the frame-stacking agents performed similarly in Pong with very low entropy, indicating that the environment has a low degree of partial observability.
In Appendix~\ref{app:corr_entropy}, we further investigate the link between changes in the entropy and changes in the performance, and provide additional evidence that partial observability degrades the performance of frame-stacking agents.

\begin{figure}[t]
    \centering
    \includegraphics[width=.49\textwidth]{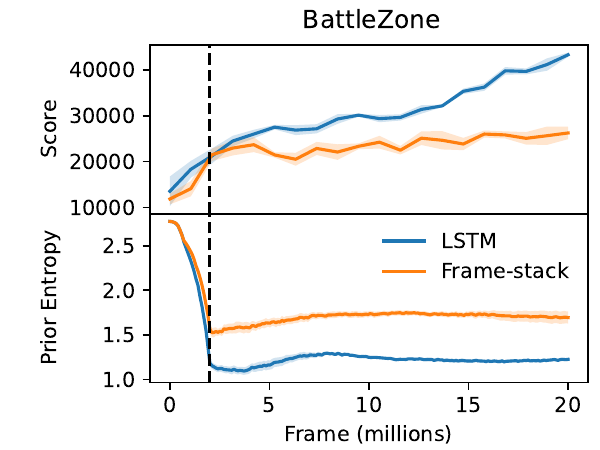}
    \includegraphics[width=.49\textwidth]{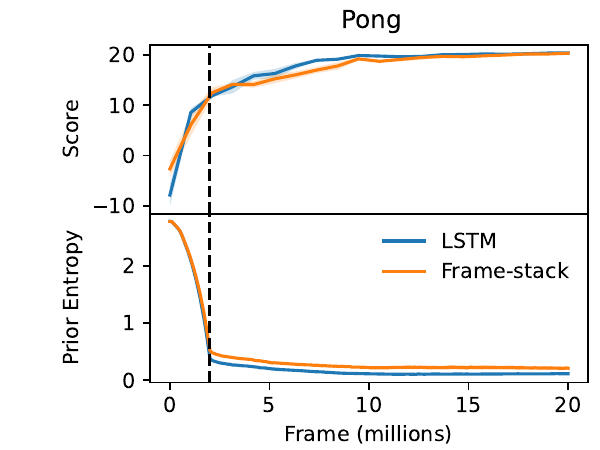}
    \caption{Entropy of the prior distribution $p(z|h,a)$ during training in BattleZone (left, more partially observable) and Pong (right, nearly fully observable). Lines and shadings represent average and 1 standard error, respectively. Dashed lines indicate the end of WTA annealing.}%\bernhard{generally, it might be a good idea to typeset the first occurence of acronyms (that includes their definition) in the paper in bold face, to help the reader - I assume this is winner takes all}
    \label{fig:entropy_score}
    %\end{minipage}
    % \hfill
    % \begin{minipage}[b]{.35\linewidth}
    %     \centering
    %     \includegraphics[width=.9\textwidth]{plots/efficiency_confounding.pdf}
    %     \captionof{figure}{Effect of confounding on learning efficiency.}
    %     \label{fig:efficiency_confounding}
    % \end{minipage}
\end{figure}

\begin{wrapfigure}{r}{.4\linewidth}
    \centering
    \includegraphics[width=.98\linewidth]{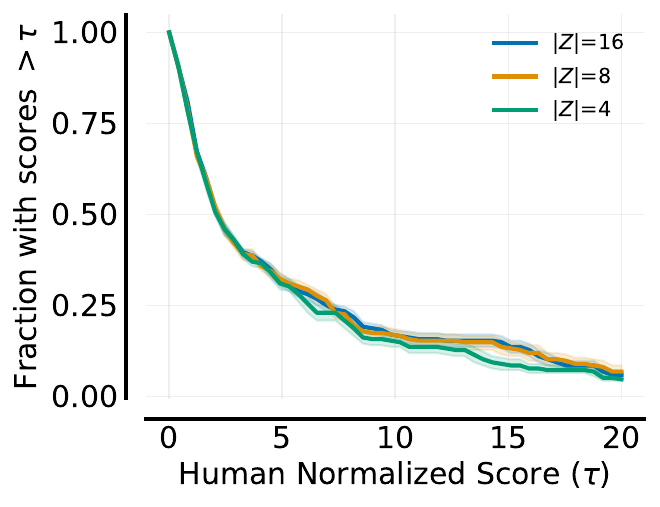}
    \caption{Effect of $|\latentspace|$.}
    \label{fig:zdim_profile}
\end{wrapfigure}
\textbf{Discrete Latent Dynamics Model}
Our latent dynamics model leverages multiple predictions to model the stochasticity of the environments.
To assess the robustness of this approach, we vary the number of predictions $|\latentspace|$ and report performance profile in Figure~\ref{fig:zdim_profile}.
We find that $|\latentspace|=8$ and $|\latentspace|=16$ yield nearly identical results, with noticeable degradation only when $|\latentspace|$ is reduced to 4.
This indicates that, while the stochasticity cannot be ignored, in most environments it can be captured with relatively few modes, and our approach remains robust to the choice of $|\latentspace|$ provided it is not too small.
% Finally, we examine the robustness of our discrete latent dynamics modeling approach by changing the size of the discrete latent space $|\latentspace|$.
% Figure~\ref{fig:zdim_profile} shows the performance profile with respect to different $|\latentspace|$.
% In general, we found the performance to be robust, and the performance only starts to degrade for small latent space ($|\latentspace|=4$).
% This might suggest that while the dynamics are stochastic, the stochasticity can be well-approximated using only a few modes.

To summarize the ablation studies, we observe from the performance profile (Figure~\ref{fig:ablation_summary}, left) that the curves are almost dominated by the base agent, suggesting that the corrections are not environment-specific, but rather general algorithmic improvements.
Additional per-environment results can be found in Appendix~\ref{app:additional}.
\section{Related Work}
\label{sec:related}

\textbf{Advantage Estimation}
\citet{baird1994reinforcement} first introduced the advantage function and the algorithm \emph{advantage updating} to solve continuous time RL problems.
Later, \citet{kakade2002approximately} showed that the performance difference between two policies can be described using the advantage function, which became the foundation of various modern policy optimization algorithms.
More recently, \citet{schulman2015high} proposed Generalized Advantage Estimation (GAE), which utilizes TD($\lambda$)~\citep{sutton1988learning} to perform on-policy multi-step estimates of the advantage function, and demonstrated its effectiveness in continuous control settings.
\citet{wang2016dueling} proposed dueling network, an extension of DQN, which parametrized $Q_\theta=V_\theta+A_\theta$ and showed that this parametrization improves the performance of the original DQN.
\citet{tang2023va} proposed VA learning, which uses a similar decomposition, but updates $V$ and $A$ separately, and showed its convergence properties in the tabular setting and effectiveness when applied to deep RL settings.
\citet{pan2022direct} proposed DAE for on-policy estimation of the advantage function, and was later shown to be equivalent to learning control variates for policy evaluation~\citep{panvariance}.
\citet{pan2024skill} generalized DAE to off-policy settings, and the present work extends its domain to POMDPs and improves its computational efficiency.

\textbf{Partial Observability}
POMDPs provide a general framework for studying decision making with incomplete states~\citep{aastrom1965optimal}.
In RL, POMDPs are usually solved by first converting them into MDPs using belief states or information vectors~\citep{bertsekas2012dynamic, kaelbling1998planning}.
In deep RL, common approaches include frame-stacking~\citep{mnih2015human}, or modeling the histories directly~\citep{kapturowski2018recurrent, hausknecht2015deep, hafner2025mastering}.
% These methods typically assume access to full trajectories, enabling a reduction from POMDPs to MDPs.
% However, this may not be true in more general settings, where one has to reason under hidden information.
% This type of problem is central to the study of causal inference~\citep{pearl2009causality, peters2017elements}, dating at least back to~\citet{splawa1990application, rubin1974estimating}.
% In RL, similar problems have been studied in the bandit setting~\citep{bareinboim2015bandits, tennenholtz2021bandits} and the sequential setting~\citep{tennenholtz2020off, pace2023delphic},
% and we showed that misalignment between the behavior and the target policies can also cause confounding and negatively impact the agent in deep RL.

\textbf{Latent Dynamics Model}
Learning dynamics models in the latent space is a promising approach to model-based RL~\citep{ha2018world, han2019variational, schrittwieser2020mastering, hafner2025mastering, antonoglou2021planning}.
Similar ideas have also been explored using bisimulation metrics~\citep{ferns2004metrics, zhang2020learning}, and were shown to be effective in learning representations for downstream tasks.
However, learning dynamics models in the latent space is prone to collapse, and it is common to rely on either reconstructing the observations or self-supervision to learn meaningful representations~\citep{anand2021procedural, deng2022dreamerpro}.
In the present work, we combined self-supervised learning methods~\citep{schwarzer2020data, grill2020bootstrap} and the WTA loss~\citep{makansi2019overcoming, rupprecht2017learning} to overcome the collapsing problem and reduce computational overhead of learning high dimensional models.

\textbf{Scaling Deep RL}
Scaling has been central to progress in deep learning, where larger models were shown to yield better performance~\citep{hestness2017deep, henighan2020scaling, zhai2022scaling}. However, scaling in deep reinforcement learning (RL) is more challenging due to unstable training dynamics, and often requires additional regularization or architectural changes~\citep{obando2024mixtures, schwarzer2023bigger, nauman2024bigger, castanyer2025stable}.

\section{Discussion}
\label{sec:discussion}
In the present work, we extended DAE for POMDPs and addressed its high computational complexity by using discrete latent dynamics models.
This opens up possibilities of using DAE to build sample-efficient RL agents for challenging real-world domains, where partial observability and high-dimensional observations are common.
Through experiments in the ALE with a modified DQN agent, we demonstrated that DAE is sample-efficient and scalable, and verified the effectiveness of the proposed corrections.

We emphasize that, although our method learns transition models for DAE, they are not used for rollouts as in typical model-based algorithms.
Instead, they serve only to perform off-policy corrections (enforcing the $\hat{B}$ constraint in the DAE objective).
From this perspective, our approach is more model-free than model-based.
Its ability to scale easily without additional tuning suggests that one of the challenges in scaling up value-based model-free methods like DQN may lie in the objective function itself, which might not be well suited to current deep learning architectures.
A more detailed investigation of this hypothesis is left for future work.

Finally, we note some limitations:
(1) DAE requires learning the transition probabilities to approximate constraints for the objective. 
While we demonstrated the effectiveness of learning latent dynamics models to achieve this, doing so inevitably introduces additional hyperparameters (e.g., network architectures of the dynamics models) and complicates the implementation.
Interestingly, this type of problem falls under a more general setting known as conditional moment restriction, and is an active research area~\citep{newey1990efficient, bennett2019deep, muandet2020dual, kremer2024geometry}.
One direction for future work is to explore more robust and efficient alternatives to approximate the constraints.
(2) We explored scaling of the convolutional layers of the image encoders, but it remains an open question how scaling other components, such as other parts of the function approximator or number of updates, might impact performance.
A more systematic study of scaling across the full architecture could yield further insights.

\subsubsection*{Acknowledgments}
\label{sec:ack}
The authors thank the International Max Planck Research School for Intelligent Systems (IMPRS-IS) for supporting Hsiao-Ru Pan.
%%%%%%%%%%%%%%%%%%%%%%%%%%%%%%%%%%%%%%%%%%%%%%%%%%%%%%%%%%%%%%%%
%% NOTE: THIS MARKS THE END OF THE "MAIN TEXT"
%%%%%%%%%%%%%%%%%%%%%%%%%%%%%%%%%%%%%%%%%%%%%%%%%%%%%%%%%%%%%%%%

%%%%%%%%%%%%%%%%%%%%%%%%%%%%%%%%%%%%%%%%%%%%%%%%%%%%%%%%%%%%%%%%
%% Bibliography
%%%%%%%%%%%%%%%%%%%%%%%%%%%%%%%%%%%%%%%%%%%%%%%%%%%%%%%%%%%%%%%%
\bibliography{ref}
\bibliographystyle{rlj}

%%%%%%%%%%%%%%%%%%%%%%%%%%%%%%%%%%%%%%%%%%%%%%%%%%%%%%%%%%%%%%%%
% AUTHOR: If your paper has no supplementary materials, you may 
%         comment out the line below, which creates the title for
%         the supplementary materials.
%%%%%%%%%%%%%%%%%%%%%%%%%%%%%%%%%%%%%%%%%%%%%%%%%%%%%%%%%%%%%%%%
\beginSupplementaryMaterials

\section{Proof of Proposition~\ref{prop:main}}
\label{app:proof}
\propmain*
\begin{proof}
    First, consider the MDP induced by the POMDP (with $\statespace = \gH$)~\citep{bertsekas2012dynamic}.
    Now, note that the coverage statement $p_\pi(h_{t+1})=p_\pi(h_t)\pi(a_t|h_t)p(o_{t+1}, r_t|h_t, a_t)>0\implies p_\mu(h_{t+1})=p_\mu(h_t)\mu(a_t|h_t)p(o_{t+1}, r_t|h_t, a_t)>0$ implies that $\pi(a_t|h_t)>0\implies \mu(a_t|h_t)>0$ (i.e., $\mu$ has a larger coverage than $\pi$).
    With this condition, the proposition then follows from applying Off-policy DAE~\citep{pan2024skill} to the induced MDP.
\end{proof}
Remark: The original proof of Off-policy DAE assumes that the reward function is deterministic, which can be violated when converting POMDPs into MDPs.
As such, our definition of $B^\pi(s,a,r,s')=r + \gamma V^\pi(s')-\E_{(r', s'')\sim p(\cdot|s,a)}[r' + \gamma V^\pi(s'')]$ (in a fully observable MDP) differs slightly from the original one $B^\pi(s,a,s') = \gamma V^\pi(s')-\E_{s''\sim p(\cdot|s,a)}[\gamma V^\pi(s'')]$.

\section{Experiment Details \& Additional Results}
\label{app:exp}

\subsection{Pseudocode and additional implementation details}
We provide the pseudocode in Algorithm~\ref{alg:code}.
For illustrative purpose, the pseudocode assumes a single actor during sampling and batch size 1 during training.
Below, we discuss some implementation details.

\paragraph{WTA Training}
To avoid the WTA predictions from collapsing, we use a soft loss for the reconstruction by including $\epsilon_\text{WTA}\geq 0$ into the posterior construction.
In practice, $\epsilon_\text{WTA}$ is linearly annealed from 1 to 0 in the early stage of training.
More specifically, the posterior becomes
\begin{equation*}
    p(z|h_t, a_t, x_{t+1}) = 
    \begin{cases}
        1 - \epsilon_\text{WTA} + \frac{\epsilon_\text{WTA}}{|\latentspace|}& \text{if}\; z=\argmin_z\Vert \hat{x}_{t+1, z}-x_{t+1}\Vert\\
        \frac{\epsilon_\text{WTA}}{|\latentspace|} & \text{otherwise}
    \end{cases},
\end{equation*}
This is similar to the approach proposed by~\citet{makansi2019overcoming}, which was found to make training less dependent on initialization, except that top-$k$ nearest neighbors were used to construct the posterior.
We found the posterior can change rapidly at the beginning of training and lead to instability of $\hat{B}$.
As such, we do not include the $\hat{B}$ correction (simply force $\hat{B}\equiv 0$) for the first few steps, and only include it after $\epsilon_\text{WTA}\leq 0.75$ (approximately 125000 environment steps).
In addition, we multiply the DAE objective by $(1-\epsilon_\text{WTA})$, to prioritize model learning during the early phase of training.

Incorporating stochastic rewards in the transition model was achieved by adding a reward prediction head.
In the case of the ALE, we exploit the discreteness of the rewards ($\mathcal{R}=\{-1, 0, 1\}$ due to clipping) and construct the latent space by $\latentspace = \latentspace_O \times \mathcal{R}$.
This then allows us to decompose the prior and the posterior by $p(z|h,a) = p(z_o|h, a)p(r|h,a)$ and $q(z|h, a, r, o') = q(z_o|h, a, r, o')q(\hat{r}|h, a, r, o')$, respectively.
In this case, the posterior $q(\hat{r}|h, a, r, o')=\sI(\hat{r}=r)$ is simply the indicator function.

\paragraph{Target Policy}
As pointed out by~\citet{pan2022direct}, having a smoothly changing target policy is crucial to optimizing the DAE objective function.
Consequently, we construct the target policy using the softmax of $\hat{A}_{\theta_\text{EMA}}$.
However, as reward densities can vary drastically between environments and lead to different scales of the advantage function.
We additionally learn a temperature parameter $T$ by minimizing $\log T + \beta_\text{KL}\text{KL}(\pi_\text{EMA} || \pi)$, where both policies $\pi$ and $\pi_\text{EMA}$ are softmax policies constructed using the advantage functions (i.e. $\pi = \text{softmax}(\frac{\hat{A}}{T})$).
This KL divergence ensures that the online policy $\pi$ does not deviate too much from the target policy $\pi_\text{EMA}$, and alleviates the need to tune the temperature manually for each environment.
We note that this policy is only used for the DAE objective ($\hat{A}$ constraint), and not for data collection.

Finally, to balance the scales between various objective functions, we multiply the DAE loss by $\beta_V=\frac{1}{\Var(G)}$, where $\Var(G)$ is the variance of the returns, estimated from all trajectories in the replay buffer.

\begin{algorithm}
\caption{DAE (POMDP)}
\label{alg:code}
\begin{algorithmic}[1] %[1] enables line numbers
\REQUIRE $n$ (backup length), $k$ (burn-in length), $\tau$ (EMA coefficient), \texttt{wta\_scheduler}, \texttt{optimizer}, $\beta_\text{prior}$, $\beta_\text{rec}$, $\beta_\text{KL}$
\STATE Initialize network $f_\theta$
\STATE $\theta_\text{EMA}\leftarrow\theta$
\STATE $T\leftarrow 1$, $T_\text{EMA}\leftarrow 1$
\STATE $D=\{\}$\hfill (replay buffer)
\STATE $o_0\leftarrow\mathtt{env.reset()}$
\STATE $h_0 \leftarrow (o_0)$
\FOR{$t=0, 1, 2, \dots$}
\STATE $\hat{A}_t, H_t \leftarrow f_\theta(h_t)$ \hfill  ($\hat{A}$: advantage, $H_t$: RNN state)
\STATE $a\leftarrow$ $\epsilon\mathtt{-greedy}(\hat{A}_t)$
\STATE $(r, o')\leftarrow\mathtt{env.step}(a)$
\STATE $h_{t+1} \leftarrow (h_t, a, r, o')$ 
% \STATE $h_{t+1} \leftarrow h_{t+1-k:t+1}$ (truncation)
\STATE $D\leftarrow D\cup\{(o,a,r,o', H_t)\}$

\IF{$t+1$ mod \texttt{steps\_per\_update} = 0}
    \STATE $\epsilon_\text{WTA}\leftarrow\mathtt{wta\_scheduler}(t)$
    \STATE $\mathtt{traj} \leftarrow$ sample $(o_i, a_i, r_i, ..., o_{i+n+k})$ and $H_i$ from $D$
    \STATE $\hat{V}, \hat{A}, \hat{B}, \hat{x}, \hat{p}(\cdot|h,a)\leftarrow f_\theta(\mathtt{traj})$ \hfill (computed along the trajectory)
    \STATE $\hat{V}_\text{EMA}, \hat{A}_\text{EMA}, x_\text{EMA}\leftarrow f_{\theta_\text{EMA}}(\mathtt{traj})$
    \STATE
    $
        q_i(z) \leftarrow
        \begin{cases}
            1 - \epsilon_\text{WTA} + \frac{\epsilon_\text{WTA}}{|\latentspace|}& z=\argmin_z\Vert \hat{x}_{i+1, z}-x_{\text{EMA},i+1}\Vert\\
            \frac{\epsilon_\text{WTA}}{|\latentspace|} & \text{otherwise}
        \end{cases}
    $\hfill(posterior)
    \STATE $\gL_\text{rec} \leftarrow\sum_{i\geq k}\sum_z q_i(z) \Vert\hat{x}_{{i+1},z} - x_{\text{EMA}, i+1}\Vert^2$ \hfill (reconstruction loss)
    \STATE $\gL_\text{prior}\leftarrow -\sum_{i>k} \sum_z q_i(z) \log \hat p(z|h_i, a_i)$ \hfill (prior loss)
    \IF{$\epsilon_\text{WTA}<\epsilon_\text{cutoff}$}
        \STATE $\hat{B}_{i}\leftarrow \sum_z\left(q_i(z)-\mathtt{sg}(\hat p(z|h_i, a_i))\right)\hat{B}(h_i, a_i, z)$ \hfill ($\hat{B}$ constraint)
    \ELSE
        \STATE $\hat{B}_i\leftarrow 0$
    \ENDIF
    \STATE $\pi_\text{target}\leftarrow \mathtt{softmax}
    (\frac{\hat{A}_{\text{EMA}}}{T_\text{EMA}})$, $\pi\leftarrow \mathtt{softmax}
    (\frac{\mathtt{sg}(\hat{A})}{T})$
    \STATE $\hat{A}_i\leftarrow \hat{A}(h_i, a_i) - \sum_a\hat{A}(h_i, a)\pi_\text{target}(h_i, a)$ \hfill ($\hat{A}$ constraint)
    
    \STATE 
    $
        \gL_\text{DAE} \leftarrow \left(\sum_{j=k}^{n}\gamma^{j-k}(r_{i+j} - \hat{A}_{i+j} - \hat{B}_{i+j}) + \gamma^{n-k+1}\hat{V}_{\text{EMA},i+n+k} - \hat{V}_{i}\right)^2
    $
    \STATE $\gL_T \leftarrow \log T + \beta_\text{KL}\text{KL}(\pi_\text{target} || \pi)$
    \STATE $\beta_V\leftarrow \frac{1-\epsilon_\text{WTA}}{\Var_D(G)}$
    \STATE $\theta, T\leftarrow\mathtt{optimizer}(\beta_V\gL_\text{DAE} + \beta_\text{prior}\gL_\text{prior} + \beta_{\text{rec}}\gL_\text{rec} + \gL_T)$
    \STATE $\theta_\text{EMA}\leftarrow\tau\theta_\text{EMA} + (1-\tau)\theta$
    \STATE $T_\text{EMA}\leftarrow\tau T_\text{EMA} + (1-\tau) T$
\ENDIF
\ENDFOR
\end{algorithmic}
\end{algorithm}

\subsection{Environment Setting}
The environment settings follow the ones used by the Dopamine baseline~\citep{castro18dopamine} (see Table~\ref{tab:env_setting}).
We use EnvPool~\citep{weng2022envpool} for efficient implementation of parallelized environments.

\begin{table}[h]
    \centering
    \caption{
    ALE preprocessing parameters.
    {\color{blue} Blue}: Best practice suggested by~\citet{machado2018revisiting}.
    }
    \begin{tabular}{cc}
    \toprule
         Parameter & Value \\ \hline
         Grey-scaling & True \\
         Observation Resolution & 84$\times$84\\
         Frame Stack & 4 \\
         Action Repetitions & 4 \\
         Reward Clipping & [-1, 1] \\
         Terminal on life-loss &  {\color{blue}False}\\
         Sticky Action Prob. & {\color{blue} 0.25} \\
         $\gamma$  (discount factor) & 0.99 \\
    \bottomrule
    \end{tabular}

    \label{tab:env_setting}
\end{table}

\subsection{Hyperparameters}
Table~\ref{tab:hyperparameter} summarizes the default hyperparameters used in the experiments.
For the learning rate, we found linear warmup to be important, which is likely due to the use of LSTMs that can be unstable in the early stage of training.
The batch size indicates the number of trajectories instead of frames, and the number of frames per batch is $(\text{backup length} + \text{burn-in}) \times \text{batch size}$.

\begin{table}[h]
    \centering
    \caption{
    Default hyperparameters for the experiments.
    }
    \begin{tabular}{c|c}
        \toprule
         Parameter & Value \\ \hline
         Replay buffer size & 1000000 \\
         Minimum Steps before training & 20000\\
         Number of parallel actors & 16 \\
         $\epsilon$ (training) & Linearly annealed from 1 to 0.01 in the first 1M steps \\
         $\epsilon$ (evaluation) & 0.001 \\
         Optimizer & Adam~\citep{kingma2014adam} \\
         Learning rate & \makecell{Linear warmup from 0 to $1.25\times 10^{-4}$ in the first 100000 steps \\ then linearly annealed to $1.25\times 10^{-5}$ throughout training } \\
         Adam $\beta$ & (0.9, 0.95) \\
         Adam $\epsilon$ & $10^{-6}$ \\

         Replay ratio ($\frac{\text{Gradient updates}}{\text{Environment steps}}$) & 0.0625 \\
         Backup length & 16 \\
         Burn-in & 16 \\
         Batch size & 16 \\

         $|\latentspace|$ & 16 \\
         $\epsilon_\text{WTA}$ & Linearly annealed from 1 to 0 in the first 500000 steps \\
         $\tau$ (target EMA) & 0.995 \\

         $\beta_\text{prior}$ & 0.025 \\
         $\beta_\text{rec}$ & 1 \\
         $\beta_\text{KL}$ & 20 \\
         \bottomrule
    \end{tabular}
    \label{tab:hyperparameter}
\end{table}

\subsection{Network Architecture}\label{app:network_architecture}

Figure~\ref{fig:network} shows the network architecture used in the experiments.
In the scaling experiments, we only multiply the width of the convolutional layers in the ResNet by the multiplier, with the sizes of other layers fixed.
Table~\ref{tab:parameters} summarizes the number of parameters in each component.

We use Layer Normalization~\citep{ba2016layer} before the nonlinear activations in the MLP heads and after the LSTM.
In addition, we apply RMS normalization~\citep{zhang2019root} to the image embeddings (after the linear layer) such that the SPR objective (cosine similarity) becomes equivalent to L2 distance between the embeddings.

Similar to DreamerV3~\citep{hafner2025mastering}, we use block diagonal LSTM~\citep{van2019rethinking} (with 16 blocks) to reduce the computational complexity and number of parameters.

The neural network implementation is based on~\texttt{PyTorch}~\citep{paszke2019pytorch}, except for the block diagonal LSTM, where we used an efficient implementation from \texttt{flashrnn}~\citep{poppel2024flashrnn}.

\begin{figure}[h]
    \centering
    \includegraphics[width=.9\linewidth]{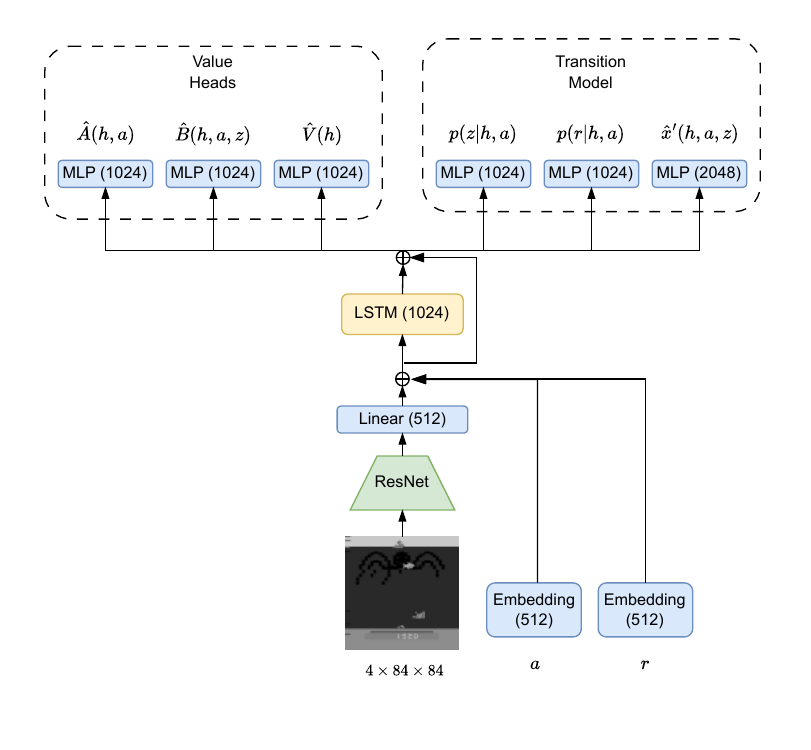}

    \caption{
    The network architecture. We use the same ResNet encoder proposed by~\citet{espeholt2018impala}. All MLP heads have 1 hidden layer.
    Previous actions and rewards are first embedded into 512-dimensional vectors before summed together with the image embedding to form the final embedding vector. We use a residual connection around the LSTM similar to~\citet{kim2017residual}.
    }
    \label{fig:network}
\end{figure}

\begin{table}[h]
    \centering
    \caption{Number of parameters in each component.}
    \begin{tabular}{cc}
        \toprule
        Component & Parameters (millions)  \\ \hline
        IMPALACNN (m=1/2/4/8) & 2 / 4 / 9 / 22 \\
        LSTM & 3 \\
        Transition Model & 21 \\
        Value heads ($\hat{A}, \hat{B}, \hat{V}$) & 4 \\ \bottomrule
    \end{tabular}
    \label{tab:parameters}
\end{table}

\subsection{DreamerV3 Baseline}\label{app:dreamerv3}
We use the official reimplementation from \url{https://github.com/danijar/dreamerv3/}.
By default, DreamerV3 uses a slightly different environment configuration compared to the Dopamine baseline, namely, higher observation resolution, full action sets, and a larger discount factor.
For a closer comparison, we lower the resolution to 80$\times$80\footnote{The preset network only accepts resolutions divisible by 16, so we lower it to 80$\times$80 instead of the standard 84$\times$84.}, use the minimal action sets, and lower the discount factor to 0.99.

In addition, as DreamerV3 uses a lower replay ratio by default\footnote{The definition of replay ratio in Dreamer ($\frac{\text{frames per batch}}{\text{frames per gradient}}$) is slightly different from the convention ($\frac{\text{gradients}}{\text{env. steps}}$)}, we increase its replay ratio such that the number of gradients is the same as DAE.
It should be noted that DreamerV3 trains the actor-critic network and the dynamics model separately, where the actor-critic learns from trajectories generated from the learned dynamics model.
This makes a direct comparison difficult, as the number of frames seen by the dynamics model and the actor-critic model differs by a factor of the rollout (imagination) length.
For simplicity, we adjust the batch size such that the ground truth frames seen by the agent throughout training remains fixed.

\subsection{Additional Results}\label{app:additional}
Per-environment learning curves and final evaluation scores of the scaling and the ablation experiments can be found in Figure~\ref{fig:pomdp_per_env_eff}, Figure~\ref{fig:pomdp_per_env_eff_ablation}, Table~\ref{tab:pomdp_per_env_score}, and Table~\ref{tab:pomdp_per_env_ablation_score}.

\begin{figure}[t]
    \centering
    \includegraphics[width=.495\linewidth]{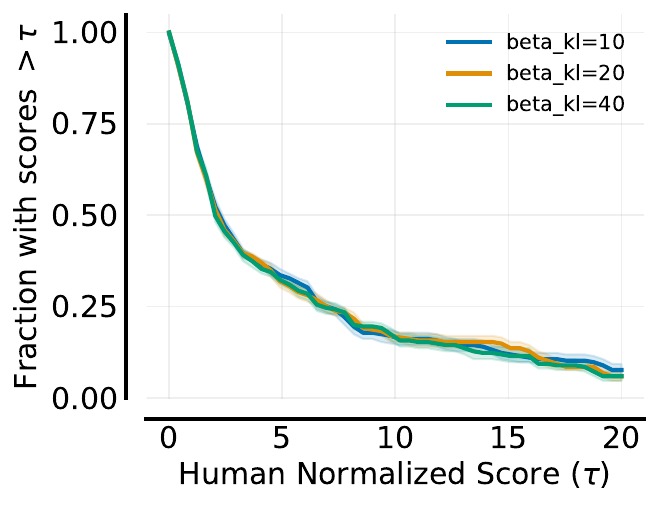}
    \includegraphics[width=.495\linewidth]{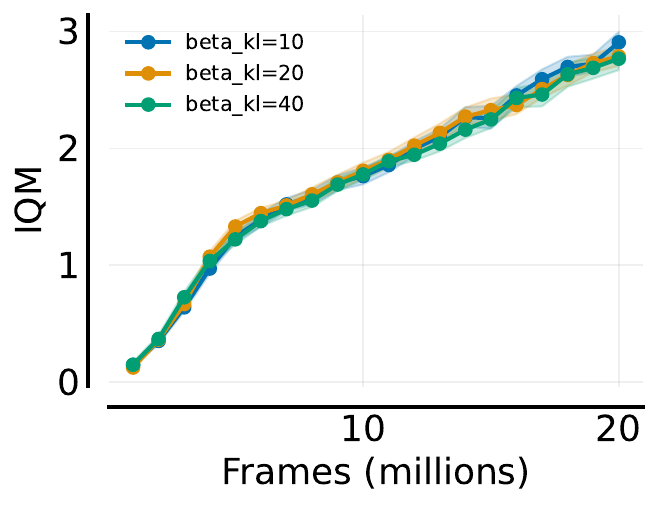}
    \caption{Performance profile (left) and sample efficiency (right) of DAE under variations of $\beta_\mathrm{KL}$.}
    \label{fig:kl_sensitivity}
\end{figure}

\paragraph{Sensitivity of $\beta_\mathrm{KL}$}
This hyperparameter controls how much the online policy can deviate from the EMA policy.
Figure~\ref{fig:kl_sensitivity} shows that the method is robust to variations in this hyperparameter.

\paragraph{Correlation between HNS and prior entropy}\label{app:corr_entropy}
Here, we examine how the changes in prior entropy ($\Delta_{H}= H_\text{frame-stack} - H_\text{LSTM}$) relate to the relative changes in human-normalized scores ($\Delta_\text{HNS} = \frac{\text{HNS}_\text{frame-stack} - \text{HNS}_\text{LSTM}}{\text{HNS}_\text{LSTM}}$, we use relative changes in HNS because the scales vary considerably across environments) by comparing the LSTM agent and the Frame-stacking agent.
The prior entropy is estimated by averaging the entropy of $p(\cdot|h_t, a_t)$ over training samples throughout training.
We find a weak but negative (Spearman) correlation ($\rho=-0.198$), suggesting that increase in partial observability (increase in entropy) is related to decrease in performance; however, the sample size is relatively small (47 environments), and we defer a more comprehensive analysis to future work.
\begin{figure}
    \centering
    \includegraphics[width=0.5\linewidth]{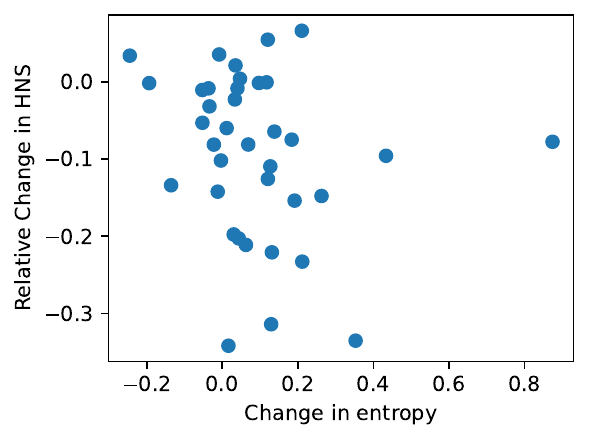}
    \caption{Scatter plot of changes in prior entropy ($\Delta_{H}$) and changes in HNS ($\Delta_\text{HNS}$) (aggregated over 5 seeds). Each point represents an environment. We remove the outliers (top/bottom 10\% changes in HNS) for better visualization.}
    \label{fig:corr_entropy_hns}
\end{figure}

% \begin{figure}[t]
%     \centering
%     \includegraphics[width=.495\linewidth]{plots/profile_zdim.pdf}
%     \includegraphics[width=.495\linewidth]{plots/efficiency_zdim.pdf}
%     \caption{Performance profile (left) and sample efficiency (right) with different $|\latentspace|$.}
%     \label{fig:sens_zdim}
% \end{figure}
% \paragraph{Latent space size} 
% The latent dynamics model relies on having multiple predictions to capture the stochasticity of the environment.
% Here, we examine the impact of the number of predictions at each timestep on the learning performance.
% From Figure~\ref{fig:sens_zdim}, we see that $|\latentspace|=8$ and $|\latentspace|=16$ have almost identical performance, suggesting that while the transitions are stochastic, the stochasticity can be captured by a few modes.

\begin{figure}
    \centering
    \begin{tikzpicture}
        \node[circle, dashed, draw, minimum size=1.1cm] (h_trunc) at (0, 0.0) {$\tilde\tau$};
        \node[circle, draw, minimum size=1.1cm] (h_curr) at (2, 0.0) {$\tau_b$};
        \node[circle, draw, minimum size=1.1cm] (a_t) at (2.5, 2) {$a_t$};
        \node[circle, draw, minimum size=1.1cm] (a_t1) at (4.5, 2) {$a_{t+1}$};
        \node[circle, draw, minimum size=1.1cm] (o_t1) at (4, 0) {$o_{t+1}$};
        \node[circle, draw, minimum size=1.1cm] (o_t2) at (6, 0) {$o_{t+2}$};
        \node[] (dots) at (7, 0) {$\cdots$};
        \node[circle, draw, minimum size=1.1cm] (r_t) at (2.5, -2) {$r_t$};
        \node[circle, draw, minimum size=1.1cm] (r_t1) at (4.5, -2) {$r_{t+1}$};
        
        \draw[->, draw] (h_trunc) -- (h_curr);
        \draw[->, draw] (h_trunc) to [out=35, in=145] (o_t1);
        \draw[->, draw] (h_trunc) to [out=35, in=145] (o_t2);
        \draw[->, draw, color=red] (h_trunc) -- (a_t) ;
        \draw[->, draw, color=red] (h_trunc) -- (a_t1) ;
        \draw[->, draw] (h_trunc) -- (r_t) ;
        \draw[->, draw] (h_trunc) -- (r_t1) ;
        
        \draw[->, draw] (h_curr) -- (o_t1);
        \draw[->, draw] (h_curr) to [out=35, in=145] (o_t2);
        \draw[->, draw] (h_curr) -- (a_t);
        \draw[->, draw] (h_curr) -- (a_t1);
        \draw[->, draw] (h_curr) -- (r_t);
        \draw[->, draw] (h_curr) -- (r_t1);

        \draw[->, draw] (o_t1) -- (o_t2);
        \draw[->, draw] (o_t1) -- (a_t1);
        \draw[->, draw] (o_t1) -- (r_t1);
        
        \draw[->, draw] (a_t) to [out=-80, in=80] (r_t);
        \draw[->, draw] (a_t) -- (a_t1);
        \draw[->, draw] (a_t) -- (o_t1);
        \draw[->, draw] (a_t) -- (o_t1);
        \draw[->, draw] (a_t) -- (o_t2);
        \draw[->, draw] (a_t) to [out=-80, in=120] (r_t1);

        \draw[->, draw] (r_t) -- (o_t1);
        \draw[->, draw] (r_t) -- (o_t2);
        \draw[->, draw] (r_t) -- (r_t1);

        \draw[->, draw] (r_t1) -- (o_t2);

        \draw[->, draw] (a_t1) to [out=-80, in=80] (r_t1);
        \draw[->, draw] (a_t1) -- (o_t2);

    \end{tikzpicture}

    \caption{
    Causal relationship between variables of a truncated sequence for a general POMDP.
    $\tilde\tau$ denotes the truncated part of the sequence, and $\tau_b$ denotes the burn-in part of the sequence.
    The red arrows show the dependencies between actions and $\tilde\tau$ when using recurrent actors.
    }
    \label{fig:confounding_pomdp}
\end{figure}

\paragraph{Confounding}
\label{app:confounding}
Confounding is a phenomenon in which unobserved variables influence both the actions and the outcomes, creating spurious correlations~\citep{pearl2009causality}.
In the case of recurrent agents, this can happen when the behavior policy has access to variables that are not present during training.

For POMDPs, since states are replaced by histories, we have to process sequences of observations instead of singular states.
In deep RL, this is typically achieved using recurrent neural networks (RNNs), such as LSTMs or GRUs~\citep{hochreiter1997long, hausknecht2015deep, mnih2016asynchronous, kapturowski2018recurrent, gruslys2018the, cho2014properties, hafner2025mastering}, but this can be computationally expensive during training when trajectories extend to thousands of steps.
Instead, it is common to truncate histories by sampling random segments of contiguous trajectories from the replay buffer, and use the first few steps as the context (burn-in) before updating the values~\citep{kapturowski2018recurrent}: 
\begin{equation*}
    \underbrace{o_0, a_0, r_0,\cdots, o_{t-k-1}}_{\tilde\tau\;\text{(truncated)}}, \underbrace{a_{t-b-1}, r_{t-b-1}, o_{t-b},\cdots}_{\tau_b \;\text{(burn-in)}}, \underbrace{a_{t-1}, r_{t-1}, o_t, a_t, r_t,\cdots, o_{t+k}}_\text{value updates}
\end{equation*}
In this setting, the truncated part of a trajectory can act as confounders and create spurious correlation between the sampled segments and the future rewards (see Figure~\ref{fig:confounding_pomdp} for the causal graph).
This can be mitigated by storing recurrent states in the replay buffer~\citep{kapturowski2018recurrent}; however, they may not always be available (e.g., offline settings, or policies with non-recurrent sequence models such as transformers~\citep{vaswani2017attention, parisotto2020stabilizing}).
If we learn the value functions (i.e., predict $\sum_{t'>t} r_{t'}$) by conditioning on $(\tau_b, a_t, r_t, o_{t+1}, \cdots)$, then $\tilde\tau$ can influence both the input variables and the output variables and lead to confounding.

\begin{figure}
    \centering
    \begin{tikzpicture}
        \node[circle, draw, minimum size=.7cm] (s_0) at (0, 0.0) {$s_0$};
        \node (s0_cond) at (0, -2) {$o_0$};
        \node[circle, draw, minimum size=.7cm] (s_1) at (2, 1) {$s_1$};
        \node[circle, draw, minimum size=.7cm] (s_2) at (2, -1) {$s_2$};
        \node[circle, draw, minimum size=.7cm] (s_3) at (4, 0) {$s_3$};
        \draw[->, draw] (s_0) -- node[pos=0.3, above=.5em] {\footnotesize$r=0$} (s_1);
        \draw[->, draw] (s_0) -- node[pos=0.3, below=.5em] {\footnotesize$r=0$} (s_2);
        \draw[->, draw] (s_1) to [out=30, in=135] node[pos=0.5, above=.5em] {\footnotesize$r=1$} (s_3) ;
        \draw[->, draw] (s_1) to [out=-30, in=155] node[pos=0.2, below=.2em] {\footnotesize$r=0$} (s_3);
        \draw[->, draw] (s_2) to [out=30, in=-155] node[pos=0.2, above=.2em] {\footnotesize$r=0$} (s_3);
        \draw[->, draw] (s_2) to [out=-30, in=-135] node[pos=0.5, below=.5em] {\footnotesize$r=1$} (s_3);
        \draw[dotted] (1.5, -1.5) rectangle (2.5, 1.5); 
        \node (obs) at (2, -2) {$o_1$};
        \node[circle, draw, dashed, minimum size=.7cm] (a_0) at (7, 0.5) {$a_0$};
        \node[circle, draw, minimum size=.7cm] (a_1) at (6, -.5) {$a_1$};
        \node[circle, draw, minimum size=.7cm] (r_1) at (8, -.5) {$r_1$};
        \draw[->, draw] (a_0) -- (a_1);
        \draw[->, draw] (a_0) -- (r_1);
        \draw[->, draw] (a_1) -- (r_1);
    \end{tikzpicture}

    \caption{
    \textbf{Left}:
    A toy POMDP with 4 states and 2 actions.
    The nodes and the arrows represent the states and the actions ($\mathtt{up}$, $\mathtt{down}$), respectively.
    $s_0$ is the starting state and $s_3$ is the terminal (absorbing) state.
    The agent does not observe the underlying state but only the emitted observation at each time step, $o_0$ and $o_1$, where both $s_1$ and $s_2$ emit the same observation $o_1$.
    \textbf{Right}:
    The (simplified) causal relationship between $a_0$, $a_1$, and $r_1$.
    We ignore other variables as they do not influence $r_1$.
    The variable $a_0$ can act as a confounder during training when the target policy is memoryless.
    }
    \label{fig:toy_confounding}
\end{figure}

In Figure~\ref{fig:toy_confounding}, we construct a toy POMDP to illustrate this effect.
In this environment, the optimal policy is $\pi^*(\mathtt{up}|o_0)=p\in[0, 1]$ (arbitrary), and $\pi^*(a|o_0, a_0{=}a, o_1)=1$ (repeat previous actions).
Consider the case where the behavior policy is the optimal policy $\pi^*(\mathtt{up}|o_0)=0.5$, but the burn-in length is 0 (i.e., memoryless) for the target policy. 
In this case, we will incorrectly infer that $V^\pi(o_1)=Q^\pi(o_1, \cdot) = 1$ for any target policy $\pi$, since all the collected trajectories receive a reward 1 irrespective of the action $a_1$.

\begin{figure}[t]
    \centering
    \includegraphics[width=.49\textwidth]{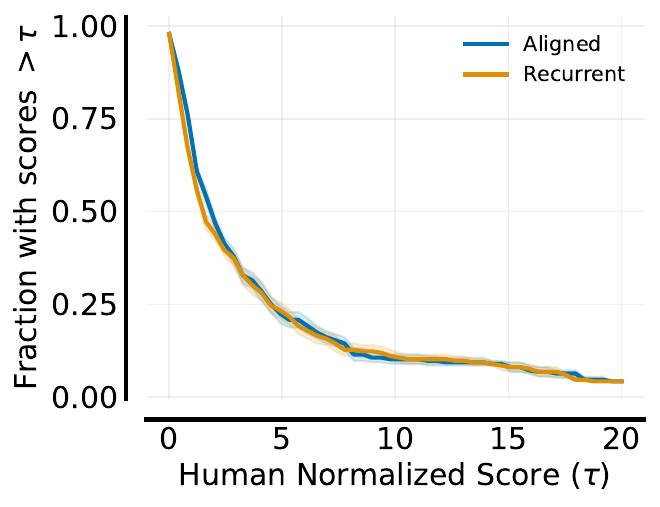}
    \includegraphics[width=.49\textwidth]{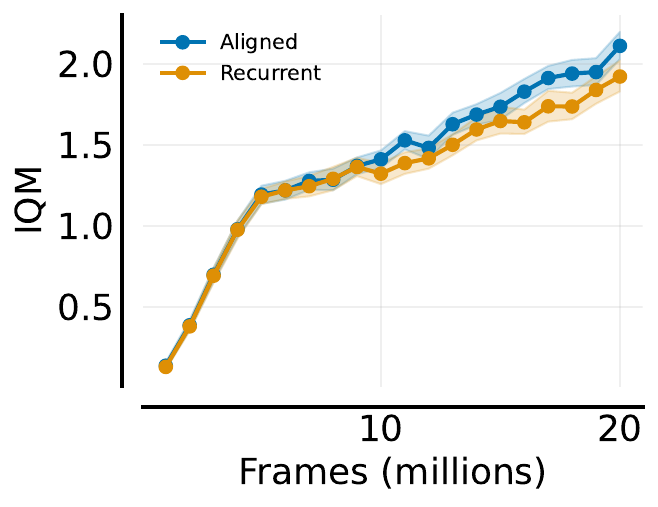}
    \captionof{figure}{Effect of confounding.}
    \label{fig:efficiency_confounding}
\end{figure}

Here, we examine the effect of this misalignment between the behavior policy and the target policy in a larger scale using the ALE.
Specifically, we consider two sampling strategies that differ in their dependencies on the truncated part of the trajectories (red arrows in Figure~\ref{fig:confounding_pomdp}):
\begin{enumerate}
    \item Aligned: the behavior policy only conditions on the past $k$ frames, where $k$ is equal to the burn-in length during training.
    \item Recurrent: the behavior policy conditions on the full history.
\end{enumerate}
In addition, we reduce the burn-in length (16$\rightarrow$4), and do not store recurrent states in the replay buffer to enhance the effect of partial observability.
Figure~\ref{fig:efficiency_confounding} shows that this subtle change in the behavior policy leads to an approximately 10\% change in the IQM, suggesting that confounding should not be ignored when designing POMDP agents.

Finally, we note that, in general, deleting the red arrows is not enough to eliminate confounding since $\tau_b$ and $r_t$ can still be influenced by $\tilde\tau$; however, our results suggest that this simple change can already have non-trivial effects on the agent's performance.
\begin{figure}
    \centering
    \includegraphics[width=0.98\linewidth]{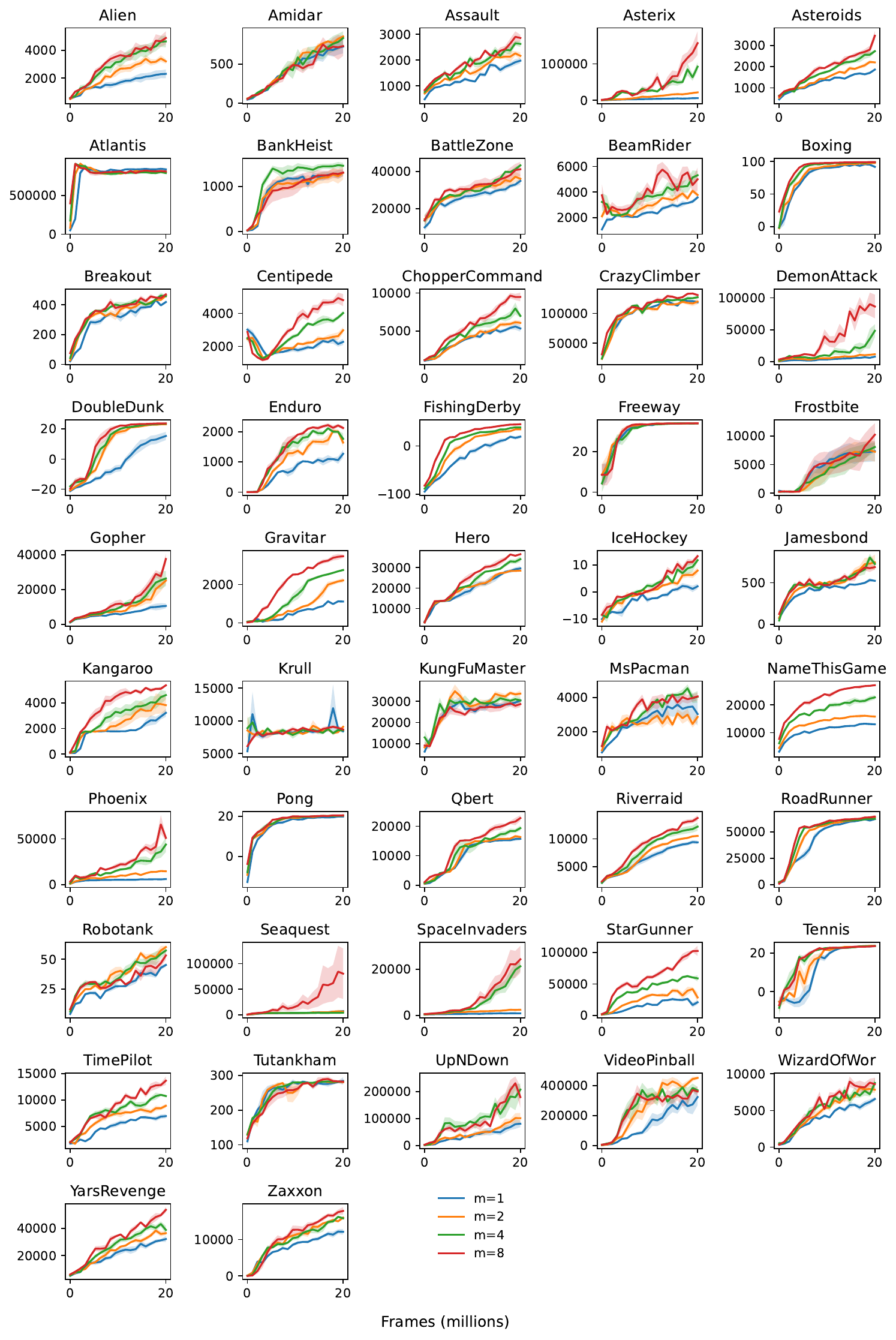}
    \caption{Per-environment learning curve. Lines and shadings represent the average and 1 standard error (5 random seeds), respectively.}
    \label{fig:pomdp_per_env_eff}
\end{figure}

\begin{table}[!t]
    \centering
    \caption{Per-environment score (average $\pm$ 1 standard error, 5 random seeds).}
    \scriptsize
    \begin{tabular}{lrrrr}    
        \toprule
        Environment & $m=1$ & $m=2$ & $m=4$ & $m=8$ \\
        \midrule 
        Alien &$2310.7\pm 314.3$ & $3211.6\pm 169.2$ & $4644.2\pm 426.6$ & $4893.1\pm 477.5$\\
        Amidar &$723.5\pm 87.1$ & $853.9\pm 9.1$ & $832.8\pm 86.4$ & $730.4\pm 126.1$\\
        Assault &$1978.4\pm 98.1$ & $2166.2\pm 139.7$ & $2628.5\pm 121.5$ & $2857.1\pm 125.3$\\
        Asterix &$5873.4\pm 622.8$ & $21777.0\pm 1557.2$ & $92247.0\pm 12015.9$ & $156815.4\pm 32296.4$\\
        Asteroids &$1857.8\pm 48.8$ & $2198.1\pm 19.9$ & $2726.2\pm 59.2$ & $3454.6\pm 208.3$\\
        Atlantis &$839273.2\pm 11004.1$ & $808648.4\pm 9425.6$ & $791305.6\pm 13727.6$ & $811127.6\pm 13961.9$\\
        BankHeist &$1305.4\pm 76.5$ & $1304.1\pm 149.7$ & $1460.8\pm 64.0$ & $1309.1\pm 80.6$\\
        BattleZone &$34892.0\pm 2089.0$ & $36060.0\pm 2278.4$ & $43248.0\pm 457.3$ & $41152.0\pm 3899.3$\\
        BeamRider &$3577.3\pm 124.5$ & $3772.2\pm 224.1$ & $5309.6\pm 329.0$ & $4995.9\pm 912.4$\\
        Boxing &$91.7\pm 2.3$ & $97.7\pm 0.7$ & $98.7\pm 0.2$ & $99.0\pm 0.2$\\
        Breakout &$418.9\pm 8.5$ & $461.6\pm 12.7$ & $458.2\pm 7.2$ & $470.1\pm 10.3$\\
        Centipede &$2277.0\pm 192.8$ & $2982.9\pm 185.7$ & $4028.5\pm 126.8$ & $4810.4\pm 368.8$\\
        ChopperCommand &$5342.0\pm 404.1$ & $6079.2\pm 301.8$ & $6994.4\pm 206.5$ & $9482.0\pm 568.2$\\
        CrazyClimber &$118894.8\pm 2867.3$ & $120037.6\pm 4495.2$ & $127332.4\pm 2315.6$ & $131008.4\pm 2073.0$\\
        DemonAttack &$7789.3\pm 394.1$ & $11520.5\pm 581.2$ & $48341.6\pm 10113.5$ & $86428.9\pm 18579.7$\\
        DoubleDunk &$15.3\pm 2.7$ & $23.1\pm 0.2$ & $23.5\pm 0.2$ & $23.6\pm 0.1$\\
        Enduro &$1269.1\pm 132.5$ & $1639.5\pm 142.5$ & $1763.9\pm 185.4$ & $2122.5\pm 63.3$\\
        FishingDerby &$19.2\pm 3.5$ & $34.5\pm 1.4$ & $38.5\pm 0.8$ & $45.1\pm 1.9$\\
        Freeway &$34.0\pm 0.0$ & $33.9\pm 0.0$ & $34.0\pm 0.0$ & $34.0\pm 0.0$\\
        Frostbite &$7360.8\pm 1772.6$ & $7254.0\pm 1741.3$ & $8085.2\pm 781.1$ & $10241.8\pm 2006.0$\\
        Gopher &$10577.7\pm 1347.5$ & $25171.1\pm 1970.2$ & $26382.4\pm 4267.6$ & $37633.7\pm 2919.8$\\
        Gravitar &$1112.8\pm 66.4$ & $2211.2\pm 119.7$ & $2752.2\pm 57.4$ & $3469.4\pm 89.7$\\
        Hero &$29598.3\pm 1142.3$ & $28531.4\pm 91.3$ & $34141.4\pm 1363.4$ & $36544.7\pm 158.9$\\
        IceHockey &$2.1\pm 1.0$ & $7.9\pm 1.2$ & $11.8\pm 0.7$ & $13.3\pm 0.8$\\
        Jamesbond &$523.0\pm 9.0$ & $745.0\pm 92.3$ & $725.8\pm 30.2$ & $687.8\pm 27.8$\\
        Kangaroo &$3221.2\pm 312.8$ & $3821.6\pm 604.8$ & $4606.8\pm 330.1$ & $5374.8\pm 58.7$\\
        Krull &$8406.2\pm 272.8$ & $9092.9\pm 113.6$ & $8547.8\pm 310.5$ & $8676.5\pm 207.5$\\
        KungFuMaster &$30375.2\pm 1098.0$ & $33618.4\pm 1513.4$ & $30416.8\pm 1440.7$ & $28644.8\pm 1421.5$\\
        MsPacman &$3028.5\pm 673.5$ & $2861.5\pm 308.6$ & $4041.3\pm 334.5$ & $4056.8\pm 230.5$\\
        NameThisGame &$13043.0\pm 195.5$ & $15804.1\pm 472.5$ & $22694.5\pm 770.2$ & $27069.2\pm 236.4$\\
        Phoenix &$6029.2\pm 340.8$ & $14570.2\pm 834.6$ & $43868.0\pm 8193.5$ & $51149.6\pm 2753.9$\\
        Pong &$19.9\pm 0.3$ & $20.4\pm 0.1$ & $20.4\pm 0.0$ & $20.5\pm 0.1$\\
        Qbert &$15747.4\pm 580.5$ & $16345.5\pm 430.6$ & $19361.2\pm 877.3$ & $22770.8\pm 971.5$\\
        Riverraid &$9367.6\pm 305.8$ & $10520.2\pm 137.2$ & $12180.8\pm 739.3$ & $13728.1\pm 505.7$\\
        RoadRunner &$62068.4\pm 738.6$ & $63964.8\pm 1631.6$ & $62404.0\pm 502.7$ & $64405.2\pm 1384.1$\\
        Robotank &$45.2\pm 1.5$ & $60.3\pm 1.1$ & $57.4\pm 2.5$ & $53.2\pm 3.8$\\
        Seaquest &$7098.9\pm 1574.6$ & $6303.8\pm 2701.4$ & $4040.7\pm 232.9$ & $80438.1\pm 49314.9$\\
        SpaceInvaders &$888.5\pm 29.2$ & $2440.2\pm 89.0$ & $21300.5\pm 2075.8$ & $24328.6\pm 5850.2$\\
        StarGunner &$20946.4\pm 3482.2$ & $28576.4\pm 7833.9$ & $58930.0\pm 4928.2$ & $102573.6\pm 8047.6$\\
        Tennis &$23.6\pm 0.0$ & $23.7\pm 0.0$ & $23.7\pm 0.0$ & $23.6\pm 0.0$\\
        TimePilot &$6943.6\pm 398.8$ & $8904.0\pm 355.0$ & $10742.8\pm 222.8$ & $13643.6\pm 796.8$\\
        Tutankham &$284.2\pm 10.9$ & $280.7\pm 1.5$ & $282.0\pm 1.4$ & $282.2\pm 6.9$\\
        UpNDown &$81297.4\pm 12656.5$ & $101890.8\pm 19512.4$ & $207808.6\pm 22451.9$ & $179719.8\pm 39120.9$\\
        VideoPinball &$324508.5\pm 55021.9$ & $452230.7\pm 9440.9$ & $369343.4\pm 22801.6$ & $361586.3\pm 33070.4$\\
        WizardOfWor &$6562.8\pm 285.9$ & $7830.8\pm 409.1$ & $8750.4\pm 658.9$ & $8589.6\pm 880.4$\\
        YarsRevenge &$32066.6\pm 2213.8$ & $36369.9\pm 1346.9$ & $38757.1\pm 2986.1$ & $53512.9\pm 1991.0$\\
        Zaxxon &$12138.0\pm 773.9$ & $16053.2\pm 198.0$ & $15867.6\pm 399.5$ & $17897.6\pm 1049.9$\\
        \bottomrule
    \end{tabular}
    \label{tab:pomdp_per_env_score}
\end{table}

\begin{figure}
    \centering
    \includegraphics[width=0.98\linewidth]{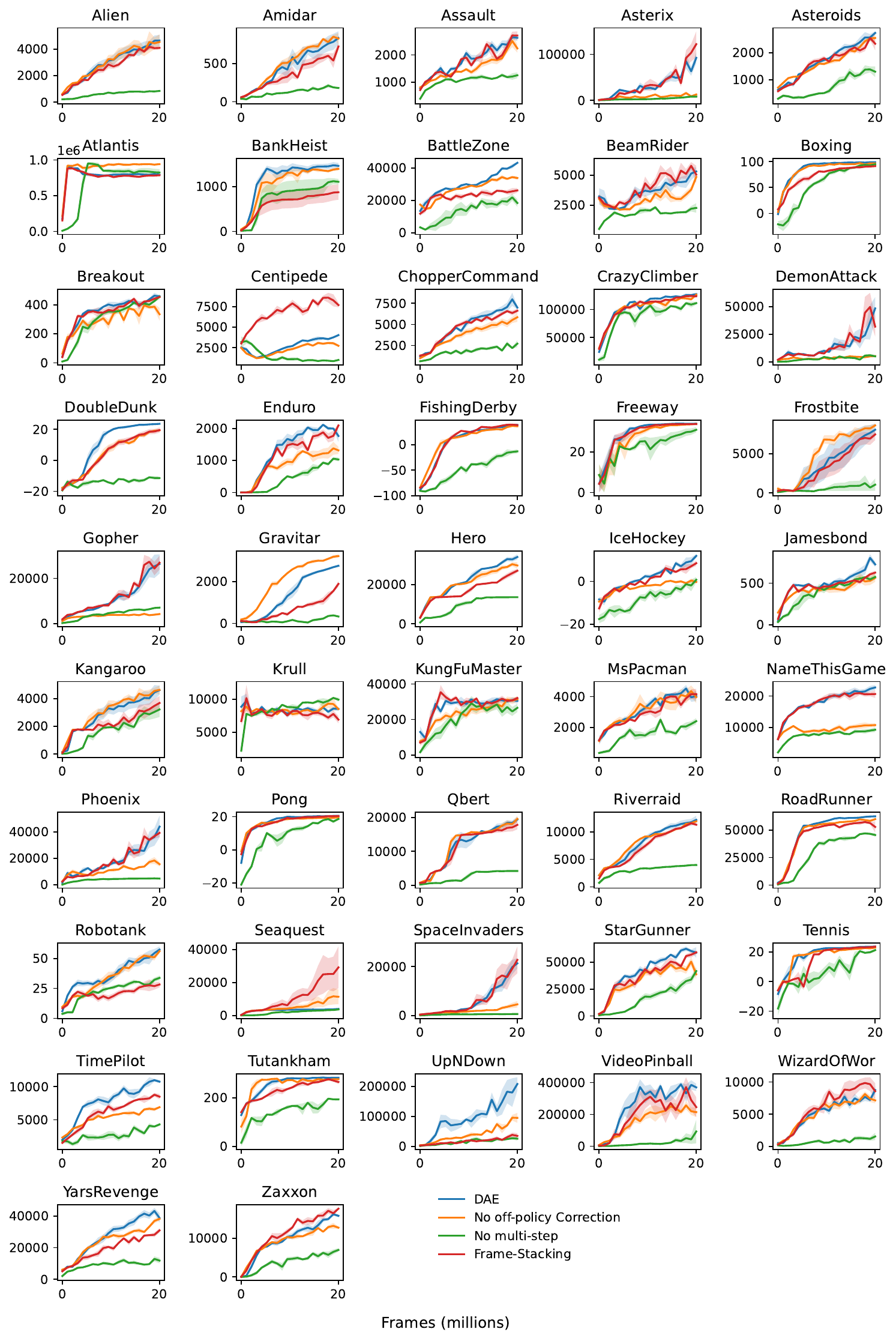}
    \caption{Per-environment learning curve for the ablation study. Lines and shadings represent the average and 1 standard error (5 random seeds), respectively.}
    \label{fig:pomdp_per_env_eff_ablation}
\end{figure}

\begin{table}[!t]
    \centering
    \caption{Per-environment score (average $\pm$ 1 standard error, 5 random seeds).}
    \scriptsize
    \begin{tabular}{lrrrr}    
        \toprule
        Environment & DAE & No off-policy corr. & No multi-step & Frame-Stacking \\
        \midrule 
        Alien &$4644.2\pm 426.6$ & $4534.9\pm 569.2$ & $847.8\pm 43.4$ & $4088.2\pm 103.6$\\
        Amidar &$832.8\pm 86.4$ & $825.2\pm 71.6$ & $180.3\pm 7.2$ & $721.9\pm 42.6$\\
        Assault &$2628.5\pm 121.5$ & $2238.6\pm 154.8$ & $1251.0\pm 105.6$ & $2713.5\pm 152.0$\\
        Asterix &$92247.0\pm 12015.9$ & $12720.2\pm 2084.5$ & $7962.0\pm 1074.1$ & $121897.6\pm 27785.2$\\
        Asteroids &$2726.2\pm 59.2$ & $2542.4\pm 146.8$ & $1295.8\pm 196.2$ & $2329.3\pm 241.7$\\
        Atlantis &$791305.6\pm 13727.6$ & $940546.0\pm 9328.4$ & $823348.0\pm 47007.6$ & $784672.8\pm 11931.7$\\
        BankHeist &$1460.8\pm 64.0$ & $1394.2\pm 40.3$ & $1105.2\pm 65.5$ & $874.0\pm 168.6$\\
        BattleZone &$43248.0\pm 457.3$ & $33860.0\pm 1104.6$ & $18440.0\pm 2342.9$ & $26240.0\pm 1349.7$\\
        BeamRider &$5309.6\pm 329.0$ & $4848.9\pm 285.8$ & $2267.9\pm 397.3$ & $5046.7\pm 736.5$\\
        Boxing &$98.7\pm 0.2$ & $96.5\pm 0.8$ & $94.7\pm 2.2$ & $91.1\pm 1.3$\\
        Breakout &$458.2\pm 7.2$ & $334.5\pm 43.5$ & $455.9\pm 18.3$ & $453.3\pm 8.0$\\
        Centipede &$4028.5\pm 126.8$ & $2732.9\pm 132.1$ & $1007.8\pm 85.8$ & $7687.2\pm 274.5$\\
        ChopperCommand &$6994.4\pm 206.5$ & $5850.8\pm 549.5$ & $2720.0\pm 413.6$ & $6623.2\pm 244.3$\\
        CrazyClimber &$127332.4\pm 2315.6$ & $126313.6\pm 2115.7$ & $111269.2\pm 2672.8$ & $123614.4\pm 3092.7$\\
        DemonAttack &$48341.6\pm 10113.5$ & $5218.7\pm 686.2$ & $5020.6\pm 989.9$ & $31863.0\pm 8886.9$\\
        DoubleDunk &$23.5\pm 0.2$ & $19.3\pm 0.6$ & $-11.5\pm 1.0$ & $19.5\pm 1.6$\\
        Enduro &$1763.9\pm 185.4$ & $1314.1\pm 112.3$ & $1029.2\pm 59.7$ & $2091.5\pm 46.7$\\
        FishingDerby &$38.5\pm 0.8$ & $36.1\pm 3.0$ & $-13.2\pm 4.1$ & $38.3\pm 3.0$\\
        Freeway &$34.0\pm 0.0$ & $33.8\pm 0.1$ & $30.9\pm 1.3$ & $33.7\pm 0.2$\\
        Frostbite &$8085.2\pm 781.1$ & $8635.1\pm 207.1$ & $1064.0\pm 782.3$ & $7483.8\pm 736.3$\\
        Gopher &$26382.4\pm 4267.6$ & $4191.4\pm 450.7$ & $7041.7\pm 211.0$ & $26935.4\pm 3043.6$\\
        Gravitar &$2752.2\pm 57.4$ & $3216.6\pm 82.3$ & $331.8\pm 14.9$ & $1887.2\pm 133.9$\\
        Hero &$34141.4\pm 1363.4$ & $29812.6\pm 1610.7$ & $13605.3\pm 37.5$ & $27143.2\pm 967.4$\\
        IceHockey &$11.8\pm 0.7$ & $-0.3\pm 0.6$ & $0.7\pm 1.3$ & $8.4\pm 1.2$\\
        Jamesbond &$725.8\pm 30.2$ & $564.0\pm 48.3$ & $574.6\pm 31.6$ & $626.6\pm 22.9$\\
        Kangaroo &$4606.8\pm 330.1$ & $4632.0\pm 307.4$ & $3214.4\pm 541.7$ & $3682.4\pm 494.5$\\
        Krull &$8547.8\pm 310.5$ & $8526.4\pm 96.9$ & $9962.0\pm 317.9$ & $6928.6\pm 548.1$\\
        KungFuMaster &$30416.8\pm 1440.7$ & $31408.0\pm 1416.6$ & $26532.4\pm 2345.3$ & $32060.8\pm 997.7$\\
        MsPacman &$4041.3\pm 334.5$ & $3967.0\pm 387.1$ & $2407.8\pm 197.2$ & $4167.5\pm 336.8$\\
        NameThisGame &$22694.5\pm 770.2$ & $10720.4\pm 569.4$ & $9238.1\pm 741.8$ & $20613.1\pm 1231.7$\\
        Phoenix &$43868.0\pm 8193.5$ & $15533.0\pm 1577.2$ & $4650.3\pm 159.4$ & $39143.1\pm 3136.0$\\
        Pong &$20.4\pm 0.0$ & $19.5\pm 0.2$ & $18.6\pm 0.2$ & $20.4\pm 0.1$\\
        Qbert &$19361.2\pm 877.3$ & $19561.2\pm 915.4$ & $4189.0\pm 87.4$ & $17798.3\pm 1326.0$\\
        Riverraid &$12180.8\pm 739.3$ & $11916.2\pm 237.8$ & $3980.9\pm 79.9$ & $11299.7\pm 214.2$\\
        RoadRunner &$62404.0\pm 502.7$ & $59853.2\pm 767.2$ & $45465.6\pm 1543.8$ & $52798.4\pm 5170.7$\\
        Robotank &$57.4\pm 2.5$ & $56.0\pm 1.7$ & $33.8\pm 2.0$ & $28.3\pm 2.4$\\
        Seaquest &$4040.7\pm 232.9$ & $11388.6\pm 4435.2$ & $3739.4\pm 618.7$ & $29157.9\pm 12614.7$\\
        SpaceInvaders &$21300.5\pm 2075.8$ & $4626.3\pm 1308.1$ & $679.9\pm 38.2$ & $22698.2\pm 5437.3$\\
        StarGunner &$58930.0\pm 4928.2$ & $38745.6\pm 4996.7$ & $41747.6\pm 3573.0$ & $59159.2\pm 1743.8$\\
        Tennis &$23.7\pm 0.0$ & $23.0\pm 0.1$ & $21.3\pm 0.8$ & $23.7\pm 0.0$\\
        TimePilot &$10742.8\pm 222.8$ & $6904.8\pm 124.2$ & $4312.8\pm 212.6$ & $8489.2\pm 400.4$\\
        Tutankham &$282.0\pm 1.4$ & $276.0\pm 6.7$ & $193.1\pm 1.4$ & $264.5\pm 8.3$\\
        UpNDown &$207808.6\pm 22451.9$ & $94944.8\pm 16508.2$ & $25498.3\pm 6082.6$ & $34799.4\pm 9229.8$\\
        VideoPinball &$369343.4\pm 22801.6$ & $213833.7\pm 30383.0$ & $92563.9\pm 73976.6$ & $245359.0\pm 62400.9$\\
        WizardOfWor &$8750.4\pm 658.9$ & $7143.6\pm 449.0$ & $1544.0\pm 562.6$ & $8562.8\pm 1234.9$\\
        YarsRevenge &$38757.1\pm 2986.1$ & $38156.7\pm 1201.2$ & $11820.9\pm 1718.7$ & $30876.9\pm 1361.5$\\
        Zaxxon &$15867.6\pm 399.5$ & $12788.4\pm 300.6$ & $7003.2\pm 466.9$ & $17726.8\pm 171.2$\\
        \bottomrule
    \end{tabular}
    \label{tab:pomdp_per_env_ablation_score}
\end{table}

\subsection{Compute Resources}\label{app:resource}
All experiments were conducted using an internal cluster of Nvidia H100 GPUs.
Runtime varies across environments and scales approximately proportionally with the model size ($m$), where a single run of the $m=8$ model takes approximately 18 hours, while a run with $m=4$ takes approximately 10 hours.

\end{document}

%% file: math_commands.tex
\usepackage{amsmath,amsfonts,bm}

\def\eqref#1{equation~\ref{#1}}
\def\1{\bm{1}}

\DeclareMathAlphabet{\mathsfit}{\encodingdefault}{\sfdefault}{m}{sl}
\SetMathAlphabet{\mathsfit}{bold}{\encodingdefault}{\sfdefault}{bx}{n}

\def\gH{{\mathcal{H}}}

\def\gL{{\mathcal{L}}}

\def\sI{{\mathbb{I}}}

\newcommand{\E}{\mathbb{E}}

\newcommand{\R}{\mathbb{R}}

\newcommand{\Var}{\mathrm{Var}}

\DeclareMathOperator*{\argmin}{arg\,min}